\documentclass[10pt]{article} 
\usepackage[accepted]{tmlr}


\usepackage{amsmath,amsfonts,bm}









\def\eqref#1{equation~\ref{#1}}









\def\1{\bm{1}}










\DeclareMathAlphabet{\mathsfit}{\encodingdefault}{\sfdefault}{m}{sl}
\SetMathAlphabet{\mathsfit}{bold}{\encodingdefault}{\sfdefault}{bx}{n}











\newcommand{\R}{\mathbb{R}}



\DeclareMathOperator*{\argmin}{arg\,min}

\usepackage{hyperref}
\usepackage{url}

\usepackage{customboxes}
\usepackage[utf8]{inputenc} 
\usepackage[T1]{fontenc}    
\usepackage{hyperref}       
\usepackage{url}            
\usepackage{booktabs}       
\usepackage{amsfonts}       
\usepackage{nicefrac}       
\usepackage{microtype}      
\usepackage{xcolor}         
\usepackage[backgroundcolor=pink]{todonotes} 

\usepackage{mathtools} 
\usepackage{amsthm}
\usepackage{amsmath}
\usepackage{relsize}
\usepackage{comment}
\usepackage[font=small]{caption}
\usepackage[font=small]{subcaption}
\usepackage{hyperref}
\usepackage{footmisc}

\usepackage[capitalise]{cleveref}       

\theoremstyle{definition}
\newtheorem{definition}{Definition}
\newtheorem{example}{Example}
\theoremstyle{plain}
\newtheorem{theorem}{Theorem}

\newtheorem{proposition}{Proposition}

\newtheorem{condition}{Condition}



\newcommand{\Parents}{\operatorname{PA}} 
\newcommand{\set}[1]{\{#1\}}
\newcommand{\cG}{\mathcal{G}}
\newcommand{\cD}{\mathcal{D}}
\newcommand{\SHD}{\textnormal{SHD}}
\newcommand{\mean}{\mathbf{E}}

\newcommand{\Gx}{\mathcal{G}}

\title{Demystifying amortized causal discovery with transformers}


\author{Francesco Montagna \\
Institute of Science and Technology Austria \& Chan Zuckerberg Initiative\\
\texttt{francesco.montagna@ist.ac.at} \\
\AND
Max Cairney-Leeming \\
Institute of Science and Technology Austria
\AND
Dhanya Sridhar \\
Mila - Quebec AI Institute and Université de Montréal
\AND
Francesco Locatello\\
Institute of Science and Technology Austria
}



\begin{document}

\maketitle

\begin{abstract}
\looseness-1 Supervised learning for causal discovery from observational data often achieves competitive performance despite seemingly avoiding the explicit assumptions that traditional methods require for identifiability. In this work, we analyze CSIvA \citep{ke2023learning} on bivariate causal models, a transformer architecture for amortized inference promising to train on synthetic data and transfer to real ones. First, we bridge the gap with identifiability theory, showing that the training distribution implicitly defines a prior on the causal model of the test observations: consistent with classical approaches, good performance is achieved when we have a good prior on the test data, and the underlying model is identifiable. Second, we find that CSIvA can not generalize to classes of causal models unseen during training: to overcome this limitation, we theoretically and empirically analyze \textit{when} training CSIvA on datasets generated by multiple identifiable causal models with different structural assumptions improves its generalization at test time.
Overall, we find that amortized causal discovery with transformers still adheres to identifiability theory, violating the previous hypothesis from \citet{lopezpaz2015towards} that supervised learning methods could overcome its restrictions.
\end{abstract}

\section{Introduction}
Causal discovery aims to uncover the underlying causal relationships between variables of a system from pure observations, which is crucial for answering interventional and counterfactual queries when experimentation is impractical or unfeasible \citep{peters2017elements, pearl2009causality, spirtes2010intro}. Unfortunately, causal discovery is inherently ill-posed \citep{glymour2019review}: unique identification of causal directions requires restrictive assumptions on the class of structural causal models (SCMs) that generated the data \citep{shimizu2006_lingam, hoyer08_anm, zhang2009pnl}. These theoretical limitations often render existing methods inapplicable, as the underlying assumptions are usually untestable or difficult to verify in practice \citep{montagna2023assumptions}. 

Recently, supervised learning algorithms trained on synthetic data have been proposed to overcome the need for specific hypotheses, which restrains the application of classical causal discovery methods to real-world problems \citep{ke2023learning, lopezpaz2015towards, li2020supervised, lippe2022efficient, lorch2022amortized}. Seminal work from \citet{lopezpaz2015towards} argues that this learning-based approach to causal discovery would allow dealing with complex
data-generating processes and would greatly reduce the
need for explicitly crafting identifiability conditions a-priori: despite this ambitious goal, the output of these methods is generally considered unreliable, as no theoretical guarantee is provided. A pair of non-identifiable structural causal models can be associated with different causal directed acyclic graphs (DAGs) $\cG \neq \tilde \cG$, while entailing the same joint distribution $p$ on the system's variables. It is thus unclear how a learning algorithm presented with observational data generated from $p$ would be able to overcome these theoretical limits and correctly identify a unique causal structure. However, the available empirical evidence seems not to reflect impossibility results, as these methods yield surprising generalization abilities on several synthetic benchmarks. Our work aims to bridge this gap by studying the performance of a transformer architecture for causal discovery through the lens of the theory of identifiability from observational data.
\looseness-1Specifically, we analyze the CSIvA (Causal Structure Induction via Attention) model for causal discovery \citep{ke2023learning}, focusing on bivariate graphs, as they offer a controlled yet non-trivial setting for the investigation. 
As our starting point, we provide closed-form examples that identify the limitations of CSIvA in recovering causal structures of linear non-Gaussian and nonlinear additive noise models, which are notably identifiable, and demonstrate the expected failures through empirical evidence.
These findings suggest that the class of structural causal models that can be identified by CSIvA is inherently dependent on the specific class of SCMs observed during training. Thus, the need for restrictive hypotheses on the data-generating process is intrinsic to causal discovery, both in the traditional and modern learning-based approaches: assumptions on the test distribution either are posited when selecting the algorithm (traditional methods) or in the choice of the training data (learning-based methods). To address this limitation, we theoretically and empirically analyze \emph{when} training CSIvA on datasets generated by multiple identifiable SCMs with different structural assumptions improves its generalization at test time. Our experimental findings are based on the analysis of $ \sim  \hspace{-1mm}1$M test runs.
In summary:
\begin{itemize}
    \item We empirically show that the class of structural causal models that CSIvA can identify is defined by the class of SCMs observed through samples during the training. We reinforce the notion that identifiability in causal discovery inherently requires assumptions, which must be encoded in the training data in the case of learning algorithms for amortized inference: this violates a previous hypothesis in \citet{lopezpaz2015towards}, which suggests that these methods could exceed the boundaries of identifiability.
    \item We empirically show that CSIvA is expected to fail to generalize on datasets generated by structural causal models characterized by mechanism types or noise terms distributions unseen during training. While this appears as a significant limit of amortized causal discovery, systematic analysis has been
    disregarded by previous work in the literature.
    \item \looseness-1To mitigate this limitation, we study the benefits of CSIvA training on mixtures of causal models. We analyze when algorithms learned on multiple models are expected to identify broad classes of SCMs (unlike many classical methods). Empirically, we show that training on samples generated by multiple identifiable causal models with different assumptions on mechanisms and noise distribution results in significantly improved generalization abilities.
\end{itemize}


\section{Related works}\label{sec:related_works}
\label{par:closely-related} In this paper, we study \textit{amortized inference of bivariate causal graphs}, i.e. supervised optimization of an inference model to directly predict a causal structure from newly provided data. In particular, this is the first work that draws a connection between identifiability theory and amortized inference of causal DAGs. \citet{dai2021ML4C} studies supervised learning of the graph skeleton, limiting its analysis to the role of identifiability of unshielded triplets. Several algorithms have instead been proposed.

\paragraph{Algorithms for amortized inference closely related to CSIvA.} \looseness-1In the context of purely observational data, \citet{lopezpaz2015towards} defines a classification problem mapping the kernel mean embedding of the data distribution to a causal graph, while \citet{li2020supervised} relies on equivariant neural network architectures. More recently, \citet{lippe2022efficient} and \citet{lorch2022amortized} proposed learning on interventional data, in addition to observations (in the same spirit as CSIvA). Despite different algorithmic implementations, the target object of estimation of most of these methods is the distribution over the space of all possible graphs, conditional on the input dataset (similarly, the ENCO algorithm in \citet{lippe2022efficient} models the conditional distribution of individual edges). This justifies our choice of restricting our study to the CSIvA architecture (despite this being a clear limitation), as in the infinite observational sample limit, these methods approximate the same distribution. 

\paragraph{Other learning-based algorithms for causal discovery.} \looseness-1Out of the scope of this work, there are methods that necessarily require interventional data \citep{brouillard2020differentiable, ke2023neural, scherrer2022learning}, and learning-based algorithms unsuitable for amortized inference \citep{lachapelle2019grandag, ng2020golem, zheng2018notears, zhang2022truncated, bello2022dagma}.

\paragraph{Differences with \citet{lopezpaz2015towards}.}
Before moving forward, we remark on the main differences between our paper and  \citet{lopezpaz2015towards}, as both works concentrate on supervised learning for the inference of \textit{bivariate} causal graphs. \citet{lopezpaz2015towards} frames causal discovery as a classification problem, where the goal is estimating and mapping the kernel mean embeddings of the distribution of the observed data to the correct causal order (assuming a causal relation is in place). Building on the theory of reproducing kernel Hilbert spaces, they provide finite sample learning rates. In particular, their study assumes observations generated by identifiable causal models. In contrast, we aim to (i) empirically investigate what conditions enable identifiability in amortized causal discovery (ii) theoretically and empirically investigate how to exploit the known identifiability results to train algorithms with improved test generalization.

\section{Background and motivation}
\looseness-1We start introducing structural causal models (SCMs), an intuitive framework that formalizes causal relations. Let $X$ be a set of random variables in $\R$ defined according to the set of structural equations:
\begin{equation}\label{eq:scm}
    X_i \coloneqq f_i(X_{\Parents^\cG_i}, N_i), \hspace{2mm} \forall i=1,\ldots,k.
\end{equation}
$N_i \in \R$ are \textit{noise} random variables. The function $f_i$ is the \textit{causal mechanism} mapping the set of \textit{direct causes} $X_{\Parents^\cG_i}$ of $X_i$ and the noise term $N_i$, to $X_i$'s value. The \textit{causal graph} $\cG$ is a directed acyclic graph (DAG) with nodes $ X = \{X_1,\ldots,X_k\}$, and edges $\{X_j \rightarrow X_i: X_j \in X_{\Parents^\cG_i} \}$, with $\Parents^\cG_i$ indices of the parent nodes of $X_i$ in $\cG$. The causal model induces a density $p_X$ over the vector $X$.

\subsection{Causal discovery from observational data}
 Causal discovery from observational data is the inference of the causal graph $\cG$ from a dataset of i.i.d. observations of the random vector $X$. In general, without restrictive assumptions on the mechanisms and the noise distributions, the direction of edges in the graph $\cG$ is not identifiable, i.e. it can not be found from the population density $p_X$. In particular, it is possible to identify only a Markov equivalence class, which is the set of graphs encoding the same conditional independencies as the density $p_X$. 
 To clarify with an example, consider the causal graph $X_1 \rightarrow X_2$ associated with a structural causal model inducing a density $p_{X_1, X_2}$. If the model is not identifiable, there exists an SCM with causal graph $X_2 \to X_1$ that entails the same joint density $p_{X_1, X_2}$. The set $\set{X_1 \rightarrow X_2, X_2 \rightarrow X_1}$ is the Markov equivalence class of the graph $X_1 \rightarrow X_2$, i.e. the set of all graphs with $X_1, X_2$ mutually dependent. Clearly, in this setting, even the exact knowledge of $p_{X_1, X_2}$ cannot inform us about the correct causal direction.

 \begin{definition}[Identifiable causal model]
    \looseness-1Consider a structural causal model with underlying graph $\cG$ and $p_X$ joint density of the causal variables. We say that the model is \textit{identifiable} from observational data if the density $p_X$ can not be entailed by a structural causal model with graph $\tilde \cG \neq \cG$.
\end{definition}

We define the \textit{post-additive noise model} (post-ANM) as the causal model with the set of  equations:
\begin{equation}
    X_i \coloneqq f_{2,i}(f_{1,i}(X_{\Parents^\cG_i}) + N_i), \hspace{1mm} \forall i=1,\ldots,d,
    \label{eq:post-anm}
\end{equation}
with $f_{2,i}$ invertible map and mutually independent noise terms. When $f_{2,i}$ is a nonlinear function, the post-ANM amounts to the identifiable \textit{post-nonlinear} model (PNL) \citep{zhang2009pnl}. When $f_{2,i}$ is the identity function and $f_{1,i}$ nonlinear, it simplifies to the nonlinear \textit{additive noise model} (ANM)\citep{hoyer08_anm, peters_2014_identifiability}, which is known to be identifiable, and is described by the set of structural equations:
\begin{equation}\label{eq:anm}
        X_i \coloneqq f_{1,i}(X_{\Parents^\cG_i}) + N_i.
\end{equation}
\looseness-1If, additionally, we restrict the mechanisms $f_{1,i}$ to be linear and the noise terms $N_i$ to a non-Gaussian distribution, we recover the identifiable \textit{linear non-Gaussian additive model} or LiNGAM \citep{shimizu2006_lingam}:
\begin{equation}
    X_i = \sum_{j \in \Parents_i^\cG} \alpha_jX_j + N_i, \hspace{1em} \alpha_j \in \R.
    \label{eq:lingam}
\end{equation}

{The special case of linear Gaussian models, with noise terms with equal variance, is also known to be identifiable \citep{peters2014identifiability}}.


\subsection{Motivation and problem definition}\label{sec:motivation}
Causal discovery from observational data relies on specific assumptions, which can be challenging to verify in practice \citep{montagna2023assumptions}. To address this, recent methods leverage supervised learning for the \textit{amortized inference of causal graphs} (or simply \textit{amortized causal discovery}), i.e. optimization of an inference model to directly predict a causal structure from newly provided data \citep{lopezpaz2015towards, li2020supervised, lippe2022efficient, lorch2022amortized, ke2023neural, lowe2020amortized}. While these approaches also aim to reduce reliance on explicit identifiability assumptions, they often lack a clear connection to the existing causal discovery theory, making their outputs generally unreliable. We illustrate this limitation through an example.

\begin{example}\label{example:motivation}
    \looseness-1We consider the CSIvA transformer architecture proposed by \citet{ke2023learning}, which can learn a map from observational data to a causal graph. The authors of the paper show that, in the infinite sample regime, the CSIvA architecture exactly approximates the conditional distribution $p(\cdot | \cD)$ over the space of possible graphs, given a dataset $\cD$. Identifiability theory in causal discovery tells us that if the class of structural causal models that generated the observations is sufficiently constrained, then there is only one graph that can fit the data within that class. For example, consider the case of a dataset that is known to be generated by a nonlinear additive noise model, and let $p(\cdot | \cD,  \textnormal{ANM})$ be the conditional distribution that incorporates this prior knowledge on the SCM: then $p( \cdot | \cD, \textnormal{ANM})$ concentrates all the mass on a single point $\cG^*$, the true graph underlying the $\cD$ observations.
    Instead, in the absence of restrictions on the structural causal model, all the graphs in a Markov equivalence class are equally likely to be the correct solution given the data. Hence, $p(\cdot | \cD)$, the distribution learned by CSIvA, assigns equal probability to each graph in the Markov equivalence class of $\cG^*$.
\end{example}


\looseness-1 {The claims made in our Example }\ref{example:motivation} are valid for all learning methods that approximate the conditional distribution over the space of graphs given the input data \citep{ke2023learning, lopezpaz2015towards, li2020supervised, lippe2022efficient, lorch2022amortized}, and suggest that these algorithms are at most informative about the equivalence class of the causal graph underlying the observations. However, the available empirical evidence does not seem to highlight these limitations, as in practice these methods can infer the true causal DAG on several synthetic benchmarks. Thus, further investigation is necessary if we want to rely on their output in any meaningful sense. 
In this work, we analyze these "black-box" approaches through the lens of established theory of causal discovery from observational data (causal inference often lacks experimental data, which we do not consider).  
We study in detail the CSIvA architecture \citep{ke2023learning} (described in Appendix \ref{app:csiva}), a variation of the transformer neural network \citep{vaswani2017attention} for the supervised learning of algorithms for amortized causal discovery. This model is optimized via maximum likelihood estimation, i.e. finding $\Theta$ that minimizes  $-\mean_{\cG, \cD}[\ln \hat{p}(\cG|\cD;\Theta)]$, where $\hat{p}(\cG|\cD;\Theta)$ is the conditional distribution of a graph $\cG$ given a dataset $\cD$ parametrized by $\Theta$.
We limit the analysis to CSIvA as it is a simple yet competitive end-to-end approach to learning causal models. 
While this is clearly a limitation of the paper, our theoretical and empirical conclusions exemplify both the role of theoretical identifiability in modern approaches and the new opportunities they provide. Additionally, it fits well within a line of works arguing that specifically transformers can learn causal concepts~\cite{jin2024cladder, zhang2024causal, scetbon2024fip} and can be explicitly trained to identify different assumptions, {such as linearity of the mechanisms or normality of the noise terms, when doing in-context learning }\citep{gupta2023learned, brown2020language}. 

\section{Experimental results through the lens of theory}\label{sec:experiments}
In this section, we present a comprehensive analysis of bivariate causal discovery with transformers and its relation to the theoretical boundaries of causal discovery from observational data \footnote{The code for CSIvA implementation can be found \href{https://github.com/francescomontagna/learning-to-induce.git}{here}. The code for reproducing the experiments of the paper can be found \href{https://github.com/francescomontagna/csiva-study.git}{here}}. We show that suitable assumptions must be encoded in the training distribution to ensure the identifiability of the test data, and we additionally study the effectiveness of training on mixtures of causal models to overcome these limitations, improving generalization abilities. In Appendix \ref{app:multivariate} we discuss how our findings can be naturally extended to the case of multivariate causal models: the intuition is that, in this case, identifiability can be guaranteed by iteratively verifying that the causal order of all bivariate subgraphs is individually identifiable (Theorem 28 in \citet{peters_2014_identifiability}). As a result, it is common to limit the analyses to bivariate graphs (see e.g. \citet{hoyer08_anm,zhang2009pnl,immer2023on,xi2025distinguishingcauseeffectcausal}), which justifies our choice.

\subsection{Experimental design}
\label{subsec:exp-design}
\looseness-1We concentrate our research on causal models of two variables, causally related according to one of the two graphs $X \rightarrow Y$, $Y \rightarrow X$. 
Bivariate models are the simplest non-trivial setting with a well-known theory of causality inference \citep{hoyer08_anm, zhang2009pnl, peters_2014_identifiability}, but also amenable to manipulation. This allows for comprehensive training and analysis of diverse SCMs and facilitates a clear interpretation of the results. 

\paragraph{Datasets.} \looseness-1Unless otherwise specified, in our experiments we train CSIvA on a sample of $15000$ synthetically generated datasets, consisting of $1500$ i.i.d. observations. 
{Classes of SCMs are defined by the mechanism type and the noise terms distribution (e.g., linear non-Gaussian): each dataset is generated from a single SCM instance sampled from that class.} 
The coefficients of the linear mechanisms are sampled in the range $[-3, -0.5] \cup [0.5, 3]$, removing small coefficients to avoid \textit{close-to-unfaithful} effects \citep{uhler2012faithfulness}. Nonlinear mechanisms are parametrized according to a neural network with random weights, a strategy commonly adopted in the literature of causal discovery (see Appendix \ref{app:data}; alternatively, we provide experiments on data generated simulating nonlinear mechanisms by sampling from a Gaussian process, as described in Appendix \ref{app:gp_results}). The post-nonlinearity of the PNL model consists of a simple map $z \mapsto z^3$. Noise terms are sampled from common distributions and a randomly generated density that we call \textit{mlp}, previously adopted in \citet{montagna2023assumptions}, defined by a standard Gaussian transformed by a multilayer perceptron (MLP) (Appendix \ref{app:data}). We name these datasets \textit{mechanism-noise} to refer to their underlying causal model. For example, data sampled from a nonlinear ANM with Gaussian noise are named \textit{nonlinear-gaussian}. More details on the synthetic data generation schema are found in Appendix \ref{app:data}. All data are standardized by their empirical variance to remove opportunities to learn shortcuts \citep{geirhos2020shortcut, reisach2021beware, montagna2023shortcuts}.

\paragraph{Metric and random baseline.} \looseness-1As our metric we use the structural Hamming distance (SHD), which is the number of edge removals, insertions or flips required to transform the predicted graph to the ground-truth. In the context of bivariate causal graphs with a single edge, this is simply an error count, so correct inference corresponds to $\SHD=0$, and an incorrect prediction gives $\SHD=1$. Additionally, we define a reference random baseline, which assigns a causal direction according to a fair coin, achieving $\SHD=0.5$  in expectation. Each architecture we analyze in the experiments is trained $3$ times, with different parameter initialization and training samples: the SHD presented in the plots is the average of each of the $3$ models on $1500$ distinct test datasets of $1500$ points each, and the error bars are $95\%$ confidence intervals.

\looseness-1 We detail the training hyperparameters in Appendix \ref{app:hyperparams}. Next, we analyze our experimental results, starting by investigating how well CSIvA generalizes on distributions unseen during training. 

\subsection{Warm up: is CSIvA capable of in and out-of-distribution generalization?} \label{sec:in-out-generalization}

\paragraph{In-distribution generalization.} \looseness-1First, we investigate the generalization of CSIvA on datasets sampled from the structural casual model that generates the train distribution, with mechanisms and noise distributions fixed between training and testing. We call this \textit{in-distribution generalization}. The main goal of these experiments is to validate that the performance of our CSIvA implementation is non-trivial. As a benchmark, we present the accuracy of two state-of-the-art approaches from the literature on causal discovery: we consider the DirectLiNGAM and NoGAM algorithms \citep{shimizu2011directlingam, montagna2023nogam}, respectively designed for the inference on LiNGAM and nonlinear ANM generated data\footnote{The \href{https://causal-learn.readthedocs.io/en/latest/}{causal-learn} implementation of the PNL algorithm could not perform better than random on our synthetic post-nonlinear data, and we observed that this was due to the sensitivity of the algorithm to the variance scale. So we report the plot of Figure \ref{fig:indist-pnl} without benchmark comparison. We remark that the point of this experiment is not to make any claims on CSIvA being state-of-the-art but to validate that the performance we obtain in our re-implementation is non-trivial. This is clear for PNL, even without comparison.\label{ft:pnl}}. The results of Figure \ref{fig:warmup} show that CSIvA can properly generalize to unseen samples from the training distribution: the majority of the trained models present SHD close to zero and comparable to the relative benchmark algorithm.

\begin{figure}
\def\indistaxheight{2.2in}
\def\indistscale{.95}

\centering   

    \begin{subfigure}[b]{\fpeval{\indistscale \textwidth}pt}
        \centering
        \scalebox{\indistscale}{\includegraphics{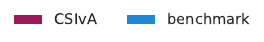}}
    \end{subfigure}
    
    \begin{subfigure}[b]{\fpeval{0.352*\indistscale \textwidth}pt}
        \scalebox{\indistscale}{\includegraphics[height=\indistaxheight]{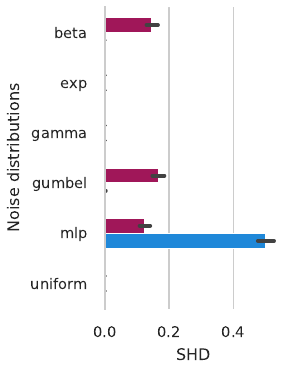}}
        \caption{Linear}
        \label{fig:indist-linear}
    \end{subfigure}%
    \begin{subfigure}[b]{\fpeval{0.248*\indistscale \textwidth}pt}
        \scalebox{\indistscale}{\includegraphics[height=\indistaxheight]{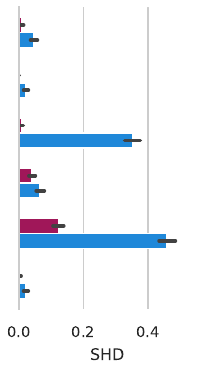}}
        \caption{Nonlinear}
        \label{fig:indist-nonlinear}
    \end{subfigure}%
    \begin{subfigure}[b]{\fpeval{0.253*\indistscale \textwidth}pt}
        \scalebox{\indistscale}{\includegraphics[height=\indistaxheight]{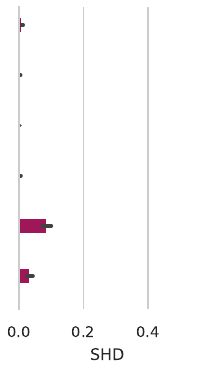}}
        \caption{PNL$^{\textnormal{\ref{ft:pnl}}}$}
        \label{fig:indist-pnl}
     \end{subfigure}%
    \caption{In-distribution generalization of CSIvA trained and tested on data generated according to the same structural causal models, fixing mechanisms, and noise distributions between training and testing. As baselines for comparison, we use DirectLiNGAM on linear SCMs and NoGAM on nonlinear ANM (we use their  \href{https://causal-learn.readthedocs.io/en/latest/index.html}{causal-learn} and \href{https://www.pywhy.org/dodiscover/dev/generated/dodiscover.toporder.NoGAM.html}{dodiscover} implementations). CSIvA performance is clearly non-trivial and generalizing well.} 
    \label{fig:warmup}
\end{figure}

\paragraph{Out-of-distribution generalization.} \looseness-1In practice, we generally do not know the SCM defining the test distribution, so we are interested in CSIvA's ability to generalize to data sampled from a class of causal models that is unobserved during training. We call this \textit{out-of-distribution generalization} (OOD). We study OOD generalization to different noise terms, analyzing the network performance on datasets generated from causal models where the mechanisms are fixed with respect to the training, while the noise distribution varies (e.g., given linear-mlp training samples, testing occurs on linear-uniform data). Orthogonally to these experiments, we empirically validate CSIvA's OOD generalization over different mechanism types (linear, nonlinear, post-nonlinear), while leaving the noise distribution (mlp) fixed across test and training. In Figure \ref{fig:ood-mechanisms}, we observe that CSIvA cannot generalize across the different mechanisms, as the SHD of a network tested on unseen causal mechanisms approximates that of the random baseline. Further, Figure \ref{fig:ood-noise} shows that out-of-distribution generalization across noise terms does not work reliably, and it is hard to predict when it might occur. We note that these findings are novel in the literature: OOD experiments in the sense we define can not be found in \citet{ke2023learning}; \citet{li2020supervised} empirical results are limited to the OOD generalization from linear Gaussian models to linear models with Exponential, Gumbel and Poisson noise. \citet{lorch2022amortized} analyses generalization from SCMs with Gaussian noise to  Laplace and Cauchy distributions, and fixed mechanisms class. The remaining literature on amortized inference that we discuss in \cref{sec:related_works} generally disregards these experiments. 

\paragraph{Implications.} \looseness-1CSIvA generalizes well to test data generated by the same class of SCMs used for training, in line with the findings in \citet{ke2023learning}, which validates our implementation and training procedure. However, it struggles when the test data are out-of-distribution, generated by causal models with different mechanism types and noise distributions than those it was trained on. From a practical perspective, this is a relevant finding, given that existing works on amortized causal discovery lack systematic experiments on the OOD setting. While training on a wider class of SCMs might overcome this limitation, it requires caution. The identifiability of causal graphs indeed results from the interplay between the data-generating mechanisms and noise distribution. However, as we argue in our Example \ref{example:motivation}, the class of causal models that a supervised learning algorithm can identify is generally not clear. 
In what follows, we investigate this point and its implications for CSIvA, showing that the identifiability of the test samples can be ensured by imposing suitable assumptions on the class of SCMs generating the training distribution.

\begin{figure}
\def\outdistaxheight{1.62in}
\def\outdistscale{1.0}
\centering

    \begin{subfigure}[b]{\fpeval{0.487*\outdistscale \textwidth}pt}
    \centering
        \scalebox{\outdistscale}{\includegraphics[height=\outdistaxheight]{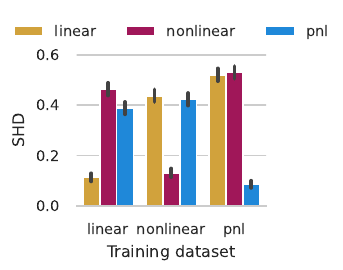}}
        \caption{Mechanisms}
        \label{fig:ood-mechanisms}
    \end{subfigure}%
    \begin{subfigure}[b]{\fpeval{0.415*\outdistscale \textwidth}pt}
    \centering
        \scalebox{\outdistscale}
        {\includegraphics[height=\outdistaxheight]{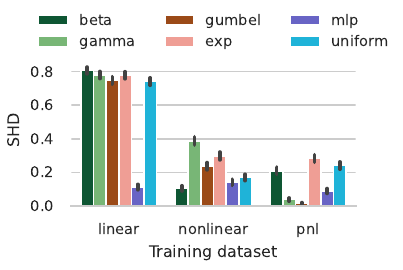}}
        \caption{Noise distributions}
        \label{fig:ood-noise}
    \end{subfigure}%

    \caption{Out-of-distribution generalisation. We train three CSIvA models on data sampled from SCMs with linear, nonlinear additive, and post-nonlinear mechanisms; and fixed \textit{mlp} noise distribution. In Figure \ref{fig:ood-mechanisms} we test across different mechanism types, with mlp-distributed noise terms both in test and training.  In Figure \ref{fig:ood-noise} we test across different noise distributions, with test mechanism types fixed from training. CSIvA struggles to generalize to unseen causal mechanisms and often displays degraded performance over new noise distributions.
    }
    \label{fig:ood}
\end{figure}

\subsection{How does CSIvA relate to identifiability theory for causal graphs?}\label{sec:identifiability} \looseness-1The CSIvA algorithm does not make structural assumptions about the causal model underlying the input data. This implies that the output of this method is unclear: as CSIvA should target the conditional distribution $p(\cdot | \cD)$ over the space of graphs, in the absence of restrictions on the functional mechanisms and the distribution of the noise terms, the causal graph $X \to Y$ is indistinguishable from $Y \to X$, as they are both equally likely to underlie the joint density $p_{X, Y}$ generating the data. 
As we discuss in Example \ref{example:motivation}, the graphical output of the trained architecture could at most identify the equivalence class of the true causal graph. Yet, our experiments of Section \ref{sec:in-out-generalization} show that CSIvA is capable of good in-distribution generalization, often inferring the correct DAG at test time. 
We explain this seeming contradiction with the following hypothesis, which motivates the experimental analysis in the remainder of this section.
\begin{emptyroundbox}    
\textbf{Hypothesis (informal).} 
At test time, the datasets on which CSIvA is expected to generalize well are determined by the family of SCMs on which training occurred, and can be analyzed using the tools from identifiability theory.
\end{emptyroundbox}
Notably, if this hypothesis is verified, we can analyse when CSIvA is expected or not to work well; the remainder of this work empirically studies this claim. Before moving forward, we present the following example adapted from \citet{hoyer08_anm}, which supports and clarifies our statement.
\begin{example}\label{example:invertible_gumbel}
 \looseness-1Consider the causal model $Y = f(X) + N,$ where $f(X) = -X$ and $p_X, p_N$ are Gumbel densities $p_X(x) = \operatorname{exp}(-x - \operatorname{exp}(-x))$ and $p_N(n) = \operatorname{exp}(-n -\operatorname{exp}(-n))$. This model satisfies the assumptions of the LiNGAM, so it is identifiable, in the sense that a backward linear model {(i.e., $Y \to X$ with linear mechanism $f$)} with the same distribution does not exist. However, in this special case, we can build a backward nonlinear additive noise model $X = g(Y) + \tilde N$ with independent noise terms: taking $p_Y(y) = \operatorname{exp}(-y -2\log(1+\operatorname{exp}(-y)))$ to be the density of a logistic distribution, $p_{\tilde N}(\tilde n) = \operatorname{exp}(-2 \tilde n - \operatorname{exp}(-\tilde n))$ and $g(y) = \log (1 + \operatorname{exp}(-y))$; we see that $p_{X,Y}$ can factorize according to two opposite causal directions, as $p_{X,Y}(x,y) = p_N(y-f(x))p_X(x) = p_{\tilde N}(x - g(y))p_Y(y)$.
 Given a dataset $\mathcal{D}$ of observations from the forward linear model {$X \to Y$}, causal discovery methods like DirectLiNGAM \citep{shimizu2011directlingam} can provably identify the correct causal direction ($X \rightarrow Y$), assuming that sufficient samples are provided.
Instead, the behavior of CSIvA seems hard to predict: given that the network approximates the conditional distribution $p(\cdot| \cD)$ over the possible graphs, for $\cD$ with arbitrary many samples we have $p(X \rightarrow Y| \cD) = p(Y \rightarrow X|\cD) = 0.5$. On the other hand, given the prior knowledge that the data-generating SCM is a linear non-gaussian additive noise model, we have $p(X \rightarrow Y| \cD, \textnormal{LiNGAM}) = 1$, because the LiNGAM is identifiable. In this sense, the class of structural causal models that CSIvA correctly infers appears to be determined by the structural causal models underlying the generation of the training data. Under this hypothesis, training CSIvA exclusively on LiNGAM-generated data is equivalent to learning the distribution $p(\cdot | \cD, \textnormal{LiNGAM})$, such that the network should be able to identify the forward linear model, whereas it could only infer the equivalence class of the causal graph if its training datasets include observations from a nonlinear additive noise model. 
\end{example}


\looseness-1The empirical results of Figure \ref{fig:invertible} show that CSIvA behaves according to our hypothesis: when training exclusively occurs on datasets $\set{\cD_{i,\to}}_i$ generated by the \textit{forward linear-gumbel model} of Example \ref{example:invertible_gumbel}, the network can identify the causal direction of test data generated according to the same SCM. Similarly, the transformer trained on datasets 
$\set{\cD_{i,\leftarrow}}_i$ from the \textit{backward nonlinear model} of the example can generalize to test data coming from the same distribution. According to our claim, instead, the network that is trained on the union of the training samples $\set{\cD_{i,\to}}_i \cup \set{\cD_{i,\leftarrow}}_i$ from the forward and backward models (\textit{50:50} ratio in Figure \ref{fig:invertible}) displays the same test SHD (around $0.5$) as a random classifier assigning the causal direction with equal probability.

Further, we investigate CSIvA's relation with known identifiability theory by training and testing the architecture on data from a linear Gaussian model, which is well-known to be unidentifiable. Not surprisingly, the results of Figure \ref{fig:linear_gauss} show that none of the algorithms that we learn can infer the causal order of linear Gaussian models with test SHD any better than a random baseline. 

\begin{figure}
\def\examplesaxheight{1.521in}
\def\examplesscale{1.0}
\centering
    \begin{subfigure}[b]{\fpeval{0.511*\examplesscale \textwidth}pt}
    \centering
        \scalebox{\examplesscale}{\includegraphics[height=\examplesaxheight]{./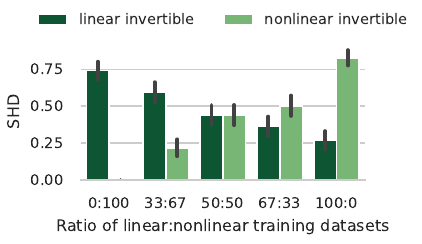}}
            \caption{\textit{Invertible} data}
            \label{fig:invertible}
        \end{subfigure}%
    \begin{subfigure}[b]{\fpeval{0.422*\examplesscale \textwidth}pt}
    \centering
        \scalebox{\examplesscale}{\includegraphics[height=\examplesaxheight]{./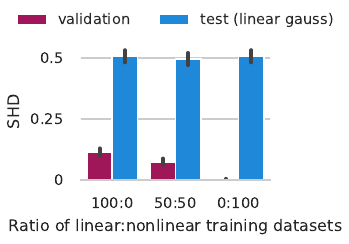}}
        \caption{Linear Gaussian test data}
        \label{fig:linear_gauss}
    \end{subfigure}%
    
    \caption{Experiments on identifiability theory. Figure \ref{fig:invertible} shows the SHD of models trained on different ratios of \textit{linear} and \textit{nonlinear invertible} data of Example \ref{example:invertible_gumbel}.  In Figure \ref{fig:linear_gauss} we test the performance on linear-Gaussian data. Models are trained with different ratios of samples from linear and nonlinear SCMs with Gaussian noise terms. The validation results showcase that the networks were trained successfully.
    In both cases, CSIvA behaves according to identifiability theory, failing to predict on \textit{invertible} data (50:50 ratio) and linear Gaussian models.} 
    \label{fig:identifiability}
\end{figure}
\paragraph{Implications.} Our experiments show that CSIvA learns algorithms that closely follow identifiability theory for causal discovery. In particular, while the method itself does not require explicit assumptions on the data-generating process, the chosen training data ultimately determines the class of causal models identifiable during inference.
Notably, previous work has argued that supervised learning approaches in causal discovery would help with "dealing with complex data-generating processes and greatly reduce the need of explicitly crafting identifiability conditions a-priori", \citet{lopezpaz2015towards}. In the case of CSIvA, this expectation does not appear to be fulfilled, as the assumptions still need to be encoded explicitly in the training data. However, this observation opens two new important questions: (1) Can we train a single network to correctly classify multiple (or even all) identifiable causal structures? (2) How much ambiguity might exist between these identifiable models? We start by answering this second question.

\subsection{An identifiability argument in favor of learning from multiple causal models}\label{sec:lowdim_argument}
Example \ref{example:invertible_gumbel} of the previous section shows that elements of distinct classes of identifiable structural causal models, such as LiNGAM and nonlinear ANM, may become non-identifiable when we consider their union. In this section, we discuss the identifiability of the post-additive noise models. Previously, \citet{hoyer08_anm} showed that the set of distributions generated according to the additive noise model \eqref{eq:anm} and that is non-identifiable is negligible. Later, \citet{zhang2009pnl} characterized non-identifiable post-nonlinear models in terms of the properties of their functional mechanisms, and the distribution of the noise terms. In this section, we discuss how these results put together show that the set of distributions generated according to a post-ANM that is non-identifiable is negligible. 

Let $X, Y$ be a pair of random variables generated according to the causal direction $X \rightarrow Y$ and the post-additive noise model structural equation:
\begin{equation}\label{eq:forward}
    Y = f_2(f_1(X) + N_Y),
\end{equation}
where $N_Y$ and $X$ are independent random variables, and $f_2$ is invertible.
If the SCM is non-identifiable, the data-generating process can be described by a \textit{backward} model with the structural equation:
\begin{equation}\label{eq:backward}
    X = g_2(g_1(Y) + N_X),
\end{equation}
$N_X$ independent from $Y$, and $g_2$ invertible. 
We introduce the random variables $\tilde{X}, \tilde{Y}$, such that the forward and backward equations can be rewritten as
\begin{equation*}
\begin{split}
    Y = f_2(\tilde{Y}), \hspace{1em} \tilde{Y} \coloneqq f_1(X) + N_Y,\\
    X = g_2(\tilde{X}), \hspace{1em} \tilde{X} \coloneqq g_1(Y) + N_X.\\
\end{split}     
\end{equation*}
We note that equivalently the following invertible additive noise models on $\tilde{X}, \tilde{Y}$ hold:
\begin{align}
    \label{eq:tilde_forward}
    \tilde{Y} = h_Y(\tilde{X}) + N_Y, \hspace{1em} h_Y \coloneqq f_1 \circ g_2, \\
    \label{eq:tilde_backward}
    \tilde{X} = h_X(\tilde{Y}) + N_X, \hspace{1em} h_X \coloneqq g_1 \circ f_2. 
\end{align}

\looseness-1Equations (\ref{eq:tilde_forward}) and (\ref{eq:tilde_backward}) reduce the problem of studying the identifiability of a post-ANM to that of studying the identifiability of an additive noise model, as done in Theorem 1 of \citet{hoyer08_anm}, which we repropose in \cref{app:main_thm}: intuitively, the statement of the theorem says that the space of all continuous distributions generated according to a bivariate additive noise model and that is non-identifiable is contained in a 2-dimensional space. As the space of continuous distributions of random variables is infinite-dimensional, we conclude that the ANM is generally identifiable. Given that, according to Equation \eqref{eq:tilde_forward} and Equation \eqref{eq:tilde_backward}, the post-ANM can be refactored in an additive noise model, the guarantees of identifiability still hold (for the formal statement and proof see \cref{app:main_thm}).

\paragraph{Implications.} As we discussed, the post-ANM is generally identifiable, which suggests that the setting of Example \ref{example:invertible_gumbel} is rather artificial. This result provides the theoretical ground for training causal discovery algorithms on datasets generated from multiple identifiable SCMs.
This is particularly appealing in the case of CSIvA, given the poor OOD generalization ability observed in our experiments of Section \ref{sec:in-out-generalization}.



\subsection{Can we train CSIvA on multiple causal models for better generalization?}\label{sec:postanm_experiments}
In this section, we investigate the benefits of training over multiple causal models, i.e. on samples generated by a combination of classes of identifiable SCMs characterized by different mechanisms and noise terms distribution. Our motivation is as follows: given that our empirical evidence shows that CSIvA is capable of in-distribution generalization, whereas dramatically degrades the performance when testing occurs out-of-distribution, it is thus desirable to increase the class of causal models represented in the training datasets. We separately study the effects of training over multiple mechanisms and multiple noise distributions and compare the testing performance against architectures trained on samples of a single SCM.

\begin{figure}
\def\scmmixaxheight{1.3in}
\def\scmmixscale{1.0}

\centering   

    \begin{subfigure}[b]{\fpeval{\scmmixscale \textwidth}pt}
        \centering
        \scalebox{\scmmixscale}{\includegraphics{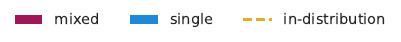}}
    \end{subfigure}
    
    \begin{subfigure}[b]{\fpeval{0.322*\scmmixscale \textwidth}pt}
        \scalebox{\scmmixscale}{\includegraphics[height=\scmmixaxheight]{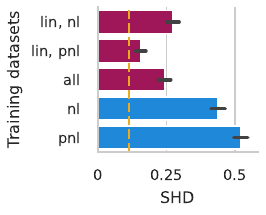}}

         \caption{Linear test data}
         \label{fig:scm-mixing-linear}
     \end{subfigure}%
    \begin{subfigure}[b]{\fpeval{0.288*\scmmixscale \textwidth}pt}
        \scalebox{\scmmixscale}{\includegraphics[height=\scmmixaxheight]{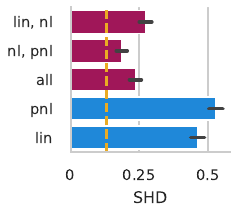}}
         \caption{Nonlinear test data}
         \label{fig:scm-mixing-nonlinear}
     \end{subfigure}%
    \begin{subfigure}[b]{\fpeval{0.293*\scmmixscale \textwidth}pt}
        \scalebox{\scmmixscale}{\includegraphics[height=\scmmixaxheight]{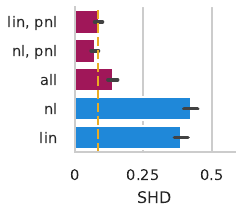}}
         \caption{PNL test data}
         \label{fig:scm-mixing-pnl}
     \end{subfigure}%
    \caption{Mixture of causal mechanisms. We train four models on samples from structural casual models with different mechanism types. We compare their test SHD (the lower, the better) against networks trained on datasets generated according to a single type of mechanism. The dashed line indicates the test SHD of a model trained on samples with the same mechanisms as test SCM. Training on multiple causal models with different mechanisms (\textit{mixed} bars) always improves performance compared to training on single SCMs.
    }
    \label{fig:amortized-scm}
\end{figure}

\paragraph{Mixture of causal mechanisms.} \looseness-1We consider four networks optimized by training of CSIvA on datasets generated from pairs (or triples) of distinct SCMs, with fixed \textit{mlp} noise and which differ in terms of their mechanisms type: linear and nonlinear; nonlinear and post-nonlinear; linear and post-nonlinear; linear, nonlinear and post-nonlinear. The number of training datasets for each architecture is fixed ($15000$) and equally split between the causal models with different mechanism types.
The results of Figure \ref{fig:amortized-scm} show that the networks trained on mixtures of mechanisms all present significantly better test SHD compared to CSIvA models trained on a single mechanism type. We find that learning on multiple SCMs improves the SHD from $\sim  \hspace{-.5mm} 0.5$ to $\sim \hspace{-.5mm} 0.2$ both on linear and nonlinear test data (Figures \ref{fig:scm-mixing-linear} and \ref{fig:scm-mixing-nonlinear}), and even better accuracy is achieved on post-nonlinear samples, as shown in Figure \ref{fig:scm-mixing-pnl}.

\paragraph{Mixture of noise distributions.}
\looseness-1Next, we analyze the test performance of three CSIvA networks optimized on samples from structural causal models that have different distributions for their noise terms, while keeping the mechanism types fixed. Figure  \ref{fig:amortized-noise} shows that training over different noises (beta, gamma, gumbel, exponential, mlp, uniform) always results in a network that is agnostic with respect to the noise distributions of the SCM generating the test samples, always achieving SHD $ < 0.1$, with the exception of datasets with mlp error terms ($0.2$ average SHD on linear and nonlinear data).

\begin{figure}
\def\noiseaxheight{2.1in}
\def\noisescale{.9}

\centering   

    \begin{subfigure}[b]{\fpeval{\noisescale \textwidth}pt}
        \centering
        \scalebox{\noisescale}{\includegraphics{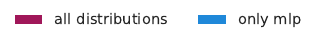}}
    \end{subfigure}
    
    \begin{subfigure}[b]{\fpeval{0.352*\noisescale \textwidth}pt}
        \scalebox{\noisescale}{\includegraphics[height=\noiseaxheight]{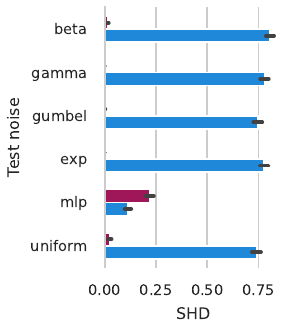}}
        \caption{Linear test data}
        \label{fig:noise-mixing-linear}
    \end{subfigure}%
     \begin{subfigure}[b]{\fpeval{0.253*\noisescale \textwidth}pt}
        \scalebox{\noisescale}{\includegraphics[height=\noiseaxheight]{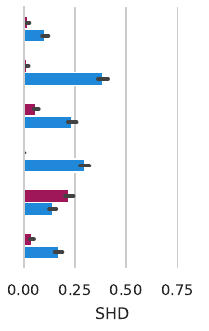}}
        \caption{Nonlinear test data}
        \label{fig:noise-mixing-nonlinear}
    \end{subfigure}%
     \begin{subfigure}[b]{\fpeval{0.253*\noisescale \textwidth}pt}
        \scalebox{\noisescale}{\includegraphics[height=\noiseaxheight]{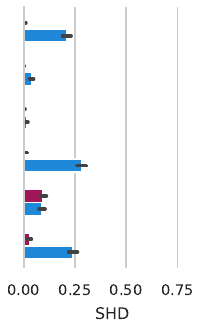}}
        \caption{PNL test data}
        \label{fig:noise-mixing-pnl}
    \end{subfigure}
    \caption{Mixture of noise distributions. We train three networks on samples from SCMs with different noise terms distributions and fixed mechanism types: linear, nonlinear, and post-nonlinear. We present their test SHD (the lower, the better) on data from SCMs with the mechanisms fixed with respect to training, and noise terms changing between each dataset. Training on multiple causal models with different noises (\textit{all distributions} bars) always improves performance compared to training on single SCMs with fixed mlp noise (\textit{only mlp} bars).
    }
    \label{fig:amortized-noise}
\end{figure}

\paragraph{Implications.} We have shown that learning on mixtures of SCMs with different noise term distributions and mechanism types leads to models generalizing to a much broader class of structural causal models during testing. Hence, combining datasets generated from multiple models looks like a promising framework to overcome the limited out-of-distribution generalization abilities of CSIvA observed in Section \ref{sec:in-out-generalization}. 
However, it is easier to incorporate prior assumptions on the class of causal mechanisms (linear, non-linear, post-non-linear) compared to the noise distributions (which are potentially infinite). This introduces a trade-off between amortized inference and classical methods for causal discovery: for example, RESIT, NoGAM, AdaScore, and LNMIX \citep{peters_2014_identifiability, montagna2023nogam,montagna2025score,liu2024causal} algorithms require no assumptions on the noise type, but only work for a limited class of mechanisms (linear and nonlinear with additive noise).

\section{Conclusion}
In this work, we investigate the interplay between identifiability theory and supervised learning for amortized inference of causal graphs, using CSIvA as the ground of our study. Consistent with classical algorithms, we demonstrate that good performance can be achieved if (i) we have a good prior on the structural causal model generating the test data (ii) the setting is identifiable. In particular, prior knowledge of the test distribution is encoded in the training data in the form of constraints on the structural causal model underlying their generation. With these results, we highlight the need for identifiability theory in modern learning-based approaches to causality, while past works have mostly disregarded this connection. Further, our findings provide the theoretical ground for training on observations sampled from multiple classes of identifiable SCMs, a strategy that improves test generalization to a broad class of causal models. Finally, we highlight an interesting new trade-off regarding identifiability: traditional methods like LiNGAM, RESIT, and PNL require strong restrictions on the structural mechanisms underlying the data generation (linear, nonlinear additive, or post-nonlinear) while generally being agnostic relative to the noise terms distribution. Training on mixtures of causal models instead offers an alternative that is less reliant on assumptions on the mechanisms, while incorporating knowledge about all possible noise distributions in the training data is practically impossible to achieve. We leave it to future work to reproduce our analysis on a wider class of architectures, as well as extend our study to interventional data with more than two nodes.




\bibliography{main}
\bibliographystyle{tmlr}

\newpage
\appendix

\section{Learning to induce: causal discovery with transformers}\label{app:csiva}

\subsection{A supervised learning approach to causal discovery}\label{app:csiva_train}
First, we describe the training procedure for the CSIvA architecture, which aims to learn the distribution of causal graphs conditioned on observational and/or interventional datasets. We omit interventional datasets from the discussion as they are not of interest to our work. Training data are generated from the joint distribution $p_{\cG, \cD}$ between a graph $\cG$ and a dataset $\cD$. First, we sample a set of directed acyclic graphs $\set{\cG^i}_{i=1}^n$ with nodes $X_1, \ldots, X_d$, from a distribution $p_{\cG}$. Then, for each graph we sample a dataset of $m$ observations of the graph nodes $\cD^i = \set{x_1^j, \ldots, x_d^j}_{j=1}^m$, $i=1,\ldots,n$. Hence, we build a training dataset $\set{\cG^i, \cD^i}_{i=1}^n$.

The CSIvA model defines a distribution $\hat{p}_{\cG | \cD}(\cdot;\Theta)$ of graphs conditioned on the observational data and parametrized by $\Theta$. Given an invertible map $\cG \mapsto A$ from a graph to its binary adjacency matrix representation of $d \times d$ entries (where $A_{ij} = 1$ iff $X_i \rightarrow X_j$ in $\cG$), we consider an equivalent estimated distribution $\hat{p}_{A | \cD}(\cdot;\Theta)$, which has the following autoregressive form: 
$$\hat{p}_{A, \cD}(A|\mathcal{D};\Theta) = \prod_{l=1}^{d^2} \sigma(A_l; \rho = f_\Theta(A_1, \ldots, A_{l-1}, \mathcal{D})),$$ where $\sigma(\cdot; \rho)$ is a Bernoulli distribution parametrized by $\rho$. $\rho$ itself is a function of $f_\Theta$ defined by the encoder-decoder transformer architecture, taking as input previous elements of the matrix $A$ (here represented as a vector of $d^2$ entries) and the dataset $\cD$. $\Theta$ is optimized via maximum likelihood estimation, i.e. $\Theta^* = \argmin_\Theta -\mean_{\cG, \cD}[\ln \hat{p}(\cG|\cD;\Theta)]$, which corresponds to the usual cross-entropy loss for the Bernoulli distribution. Training is achieved using stochastic gradient descent, in which each gradient update is performed using a pair $(\cD^i, A^i)$, $i=1\ldots, d$. In the infinite sample limit, we have $\hat{p}_{\cG | \cD}(\cdot;\Theta^*) = p_{\cG | \cD}(\cdot)$, while in the finite-capacity case, it is only an approximation of the target distribution.

\subsection{CSIvA architecture}\label{app:csiva_architecture}
In this section, we summarize the architecture of CSIvA, a transformer neural network that can learn a map from data to causally interpreted graphs, under supervised training.

\paragraph{Transformer neural network.} Transformers \citep{vaswani2017attention} are a popular neural network architecture for modeling structured, sequential data data. They consist of an \textit{encoder}, a stack of layers that learns a representation of each element in the input sequence based on its relation with all the other sequence's elements, through the mechanism of self-attention, and a decoder, which maps the learned representation to the target of interest. Note that data for causal discovery are not sequential in their nature, which motivates the adaptations introduced by \citet{ke2023learning} in their CSIvA architecture.

\paragraph{CSIvA embeddings.} Each element $x_i^j$ of an input dataset is embedded into a vector of dimensionality $E$. Half of this vector is allocated to embed the value $x_i^j$ itself, while the other half is allocated to embed the unique identity for the node $X_i$. We use a node-specific embedding because the values of each node may have very different interpretations and meanings. The node identity embedding is obtained using a standard 1D transformer positional embedding over node indices. The value embedding is obtained by passing $x_i^j$, through a multi-layer perceptron (MLP). 

\paragraph{CSIvA alternating attention.} 
Similarly to the transformer's encoder, CSIvA stacks a number of identical layers, performing self-attention followed by a nonlinear mapping, most commonly an MLP layer. The main difference relative to the standard encoder is in the implementation of the self-attention layer: as transformers are in their nature suitable for the representation of sequences, given an input sample of $D$ elements, self-attention is usually run across all elements of the sequence. However, data for causal discovery are tabular, rather than sequential: one option would be to unravel the $n \times d$ matrix of the data, where $n$ is the number of observations and $d$ the number of variables, into a vector of $n \cdot d$ elements, and let this be the input sequence of the encoder. CSIvA adopts a different strategy: the self-attention in each encoder layer consists of alternate passes over the attribute and the sample dimensions, known as \textit{alternating attention} \cite{kossen2021selfattention}. As a clarifying example, consider a dataset $\set{(x_1^i, x_2^i)}_{i=1}^n$ of $n$ i.i.d. samples from the joint distribution of the pair of random variables $X_1, X_2$. For each layer of the encoder, in the first step (known as \textit{attention between attributes}), attention operates across all nodes of a single sample $(x_1^i, x_2^i)$ to encode the relationships between the two nodes. In the second step (\textit{attention between samples}), attention operates across all samples $(x_k^1, \ldots, x_k^n), k \in \set{1, 2}$ of a given node, to encode information about the distribution of single node values. 

\paragraph{CSIvA encoder summary.} The encoder produces a summary vector $s_i$ with $H$ elements for each node $X_i$, which captures essential information about the node’s behavior and its interactions with other nodes. The summary representation is formed independently for each node and involves combining information across the $n$ samples. This is achieved with a method often used with transformers that involves a weighted average based on how informative each sample is. The weighting is obtained using the embeddings of a summary "sample" $n+1$ to form queries, and embeddings of node's samples $\set{x_i^j}_{j=1}^n$ to provide keys and values, and then using standard key-value attention.

\paragraph{CSIvA decoder.}
The decoder uses the summary information from the encoder to generate a prediction of the adjacency matrix $A$ of the underlying $\cG$. It operates sequentially, at each step producing a binary output indicating the prediction $\hat A_{i,j}$ of $A_{i,j}$, proceeding row by row. The decoder is an autoregressive transformer, meaning that each prediction $\hat A_{i,j}$ is obtained based on all elements of $A$ previously predicted, as well as the summary produced by the encoder. The method does not enforce acyclicity, although \citet{ke2023learning} shows that in cyclic outputs genereally don't occur, in practice.

\section{Training details}
\label{sec:training-details}

\subsection{Hyperparameters}\label{app:hyperparams}
In Table \ref{tab:hypeparams} we detail the hyperparameters of the training of the network of the experiments. We define an iteration as a gradient update over a batch of $5$ datasets. Models are trained until convergence, using a patience of $5$ (training until five consecutive epochs without improvement) on the validation loss - this always occurs before the $25$-th epoch (corresponding to $\approx 150000$ iterations). The batch size is limited to $5$ due to memory constraints.
\begin{table}
  \centering
  \begin{tabular}{lc}
    \toprule
     \textbf{Hypeparameter} & \textbf{Value} \\
    \midrule
    Hidden state dimension & 64 \\
    Encoder transformer layers & 8 \\
    Decoder transformer layers & 8 \\
    Num. attention heads & 8 \\
    Optimizer & Adam \\
    Learning rate & $10^{-4}$ \\
    Samples per dataset ($n$) & $1500$ \\
    Num. training datasets & $15000$ \\
    Num. iterations & $<150000$ \\
    Batch size & $5$ \\
    \bottomrule \\
  \end{tabular}
    \caption{Hyperparameters for the training of the CSIvA models of the experiments in Section \ref{sec:experiments}.}
\label{tab:hypeparams}
\end{table}

\subsection{Synthetic data}\label{app:data}
In this section, we provide additional details on the synthetic data generation, which was performed with the \texttt{causally}\footnote{\url{https://causally.readthedocs.io/en/latest/}} Python library \citep{montagna2023assumptions}. Our data-generating framework follows that of  \citet{montagna2023assumptions}, an extensive benchmark of causal discovery methods on different classes of SCMs. 

\paragraph{Distribution of the noise terms.} We generated datasets from structural causal models with the following distribution of the noise terms: Beta, Gamma, Gaussian (for nonlinear data), Gumbel, Exponential, and Uniform. Additionally, we define the \textit{mlp} distribution by nonlinear transformations of gaussian samples from a guassian distribution centered at zero and with standard deviation $\sigma$ uniformly sampled between $0.5$ and $1$. The nonlinear transformation is parametrized by a neural network with one hidden layer with $100$ units, and sigmoid activation function. The weights of the network are uniformly sampled in the range $[-1.5, 1.5]$. We additionally standardized the output of each \textit{mlp} sample by the empirical variance computed over all samples.

\paragraph{Causal mechanisms.} The \textit{nonlinear mechanisms} of the PNL model and the nonlinear ANM model are generated by a neural network with one hidden layer with $10$ hidden units, with a parametric ReLU activation function. The network weights are randomly sampled according to a standard Gaussian distribution (we refer to data with nonlinear mechanisms sampled according to this approach as \textit{NN-data}). The \textit{linear mechanisms} are generated by sampling the regression coefficients in the range $[-3, -0.5] \cup [0.5, 3]$.

\paragraph{NN-data generation: literature review.} We present an extensive list of works adopting neural networks for the sampling of nonlinear mechanisms, similarly to our work: \citet{brouillard2020differentiable, brouillard2021typing, lippe2022efficient, bello2022dagma, montagna2023assumptions, montagna2023shortcuts, ke2023neural, ke2023learning, reizinger2023jacobianbased, massidda2023cosmo, tran2024constraining}. This suggests that our data generation strategy is established in the literature of causality. Additional experiments with sampling of nonlinear mechanisms from Gaussian processes are presented in Appendix \ref{app:gp_results}.

Data are standardized with their empirical variance, which removes the presence of shortcuts which could be learned by the network, notably \textit{varsortability} \citep{reisach2021beware} and \textit{score-sortability} \citep{montagna2023shortcuts}.

\subsection{Computer resources}\label{app:compute}
Our experiments were run on a local computing cluster, using any and all available GPUs (all NVIDIA). For replication purposes, GTX 1080 Ti's are entirely suitable, as the batch size was set to match their memory capacity, when working with bivariate graphs. All jobs ran with $10\mathrm{GB}$ of RAM and $4$ CPU cores. The results presented in this paper were produced after $145$ days of GPU time, of which $68$ were on GTX 1080 Ti's, $13$ on  RTX 2080 Ti's, $11$ on A10s, $19$ on A40s, and $35$ on RTX 3090s. Together with previous experiments, while developing our code and experimental design, we used 376 days of GPU time (for reference, at a total cost of $492.14$ Euros), similarly split across whichever GPUs were available at the time:  $219$ on GTX 1080 Ti's, $38$ on RTX 2080 Ti's,  $18$ on A10s,  $63$ on RTX 3090s,  $31$ on A40s, and $6$ on A100s.

\section{CSIvA identifiability properties on multivariate SCMs}\label{app:multivariate}
In the main manuscript, we limit our empirical and theoretical analysis of the identifiability guarantees provided by CSIvA to the case of bivariate causal models. In this section, we show how our findings are expected to extend to the multivariate setting. Our starting point is Theorem 28 from \citet{peters_2014_identifiability}: the intuition is that identifiability of multivariate additive noise models can be guaranteed by iteratively verifying that the causal order of all bivariate subgraphs is individually identifiable. We formalize this reporting the following set of definitions and results from \citet{peters_2014_identifiability}.

\begin{condition}[Condition 19 of \citet{peters_2014_identifiability}]\label{cond:peters}
    Consider an additive noise model with structural equations $X_2 = f(X_1) + N$, $X_1, N$ independent random variables. The triple $(f, p_{X_1}, p_{N})$ does not solve the following differential equation for all pairs $x_1, x_2$ with $f'(x_2)\nu''(x_2 - f(x_1)) \neq 0$:
    \begin{equation}\label{eq:condition}
        \xi''' = \xi''\left(\frac{f''}{f'} - \frac{\nu'''f'}{\nu''} \right) + \frac{\nu''' \nu' f'' f'}{\nu''} - \frac{\nu'(f'')^2}{f'} - 
2\nu'' f'' f' + \nu' f''',
    \end{equation}
    Here, $\xi \coloneqq \log p_{X_1}$, $\nu \coloneqq \log p_{N}$, the logarithms of the strictly positive densities. The arguments $x_2 - f(x_1)$, $x_1$, and $x_1$ of $\nu$, $\xi$ and $f$ respectively, have been removed to improve readability.
\end{condition}

The intuition is that a bivariate additive noise model (which can be seen as a reparametrization of a post-nonlinear model, as shown in our Section \ref{sec:lowdim_argument}) is identifiable if it has a density that satisfies the above Condition \ref{cond:peters}. This can be generalized to the case of multivariate ANMs where, for identifiability to hold, Condition \ref{cond:peters} must be verified for each pair of causally related variables in the SCM: when this is verified, we refer to a \textit{restricted} additive noise model.

\begin{definition}[Definition 27 of \citet{peters_2014_identifiability}]\label{def:restricted_scm}
Consider an additive noise model with structural equations $X_i \coloneqq f_i(X_{\Parents^\Gx_i}) + N_i, i = 1, \ldots, k$, independent noise terms and causal graph $\Gx$. We call this SCM a \textit{restricted additive noise model} if for all $X_j \in X$, $X_i \in X_{\Parents^\Gx_j}$, and all sets $X_S \subseteq X$, $S \subset \mathbb
N$, with $X_{\Parents^\Gx_j} \setminus \set{X_i} \subseteq X_S \subseteq X_{\operatorname{ND}_j}^\Gx \setminus \set{X_i, X_j}$ (where $\operatorname{ND}_j$ is the set of non descendants of the node $X_j$ in the graph $\Gx$), there is a value $x_S$ with $p(x_S) > 0$, such that the triplet 
$$
(f_j(x_{\Parents^\Gx_j \setminus \set{i}}, \cdot), p_{X_i|X_S=x_S}, p_{N_j})
$$
satisfies Condition \ref{cond:peters}. Here, $f_j(x_{\Parents^\Gx_j \setminus \set{i}}, \cdot)$ denotes the mechanism function $x_i \mapsto f_j(x_{\Parents^\Gx_j})$. Additionally, we require the noise variables to have positive densities and the functions $f_j$ to be continuous and three times continuously differentiable.
\end{definition}
In the above definition we adopted the following notation: for a random vector $X = (X_1, \ldots, X_n)$, and a set $S \subseteq \set{1, \ldots, n}$, we define $X_S$ as the vector with elements $\set{X_i: i \in S}$.

Finally, the next theorem formalizes the intuition we've advocated so far: the \textit{restricted} additive noise model of Definition \ref{def:restricted_scm}, i.e. an SCM whose pairwise causal relations are individually identifiable, is itself identifiable.

\begin{theorem}[Theorem 28 of \citet{peters_2014_identifiability}]\label{thm:peters}
    Let $X$ be generated by a restricted additive noise model with graph $\Gx$, and assume that the causal mechanisms $f_j$ are not constant in any of the input arguments, i.e. for $X_i \in X_{\Parents^\Gx_j}$, there exist $x_i \neq x'_i$ such that $f_j(x_{\Parents^\Gx_j \setminus \set{i}}, x_i) \neq f_j(x_{\Parents^\Gx_j \setminus \set{i}}, x'_i)$. Then, $\Gx$ is identifiable.
\end{theorem}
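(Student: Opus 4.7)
The plan is to argue by contradiction and reduce the multivariate identifiability claim to the bivariate identifiability statement encoded in \cref{cond:peters} (i.e.\ Theorem~20 of \citet{peters_2014_identifiability}). Suppose, toward a contradiction, that two restricted additive noise models with graphs $\cG \neq \tilde\cG$ induce the same joint density $p_X$. Both DAGs yield valid Markov factorizations of $p_X$, so they must agree on every conditional independence that $p_X$ actually satisfies; yet by assumption they differ on at least one directed edge. The first step is to pin down a witnessing pair $(X_i,X_j)$: fixing a topological order of $\cG$ and walking along it, pick the smallest node $X_j$ whose parent set in $\tilde\cG$ differs from its parent set in $\cG$, and select some $X_i$ in the symmetric difference $X_{\Parents^\cG_j} \,\triangle\, X_{\Parents^{\tilde\cG}_j}$.

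The next step uses \cref{def:restricted_scm} to localize the problem to the bivariate case. Choose a set $X_S$ with $X_{\Parents^\cG_j}\setminus\{X_i\} \subseteq X_S \subseteq X_{\operatorname{ND}_j}^{\cG}\setminus\{X_i,X_j\}$ and $p(x_S)>0$; such an $X_S$ exists because the remaining parents of $X_j$ in $\cG$ are all non-descendants of $X_j$, and positivity is inherited from the strictly positive noise densities. Conditioning on $X_S=x_S$ collapses the structural equation for $X_j$ to a bivariate additive noise model $X_j = \phi(X_i) + N_j$ with $\phi(\cdot) \coloneqq f_j(x_{\Parents^\cG_j\setminus\{i\}},\cdot)$; the local Markov property of $\cG$ preserves $N_j \perp X_i$, and non-constancy of $f_j$ ensures $\phi$ is non-constant.

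The third step is to read off from $\tilde\cG$ a \emph{backward} additive noise model for the same conditional density $p_{X_i,X_j\mid X_S=x_S}$. By the choice of $X_j$, all strict ancestors of $X_j$ in the common topological order behave identically in $\cG$ and $\tilde\cG$, so the $\tilde\cG$-factorization reveals a reversed or newly appearing directed edge at $(X_i,X_j)$ and yields an additive-noise equation $X_i = \psi(X_j) + \tilde N$ with $\tilde N \perp X_j$ conditional on $X_S=x_S$. With a forward and a backward bivariate ANM both describing $p_{X_i,X_j\mid X_S=x_S}$, the triple $(\phi,\,p_{X_i\mid X_S=x_S},\,p_{N_j})$ must \emph{solve} the differential equation in \cref{cond:peters}, contradicting the hypothesis that the model is restricted. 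Hence $\cG=\tilde\cG$.

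The hard part will be the third step: guaranteeing that the conditioning set $X_S$ chosen to isolate the forward mechanism in $\cG$ simultaneously exposes a genuine backward bivariate ANM in $\tilde\cG$, with the correct noise-term independence. This is the combinatorial core of the argument and leans on a careful topological induction that always peels off the earliest disagreement, so that the already-agreed-upon ancestors play the role of a $d$-separator witnessing the reversed or missing edge in $\tilde\cG$. The remaining hypotheses (non-constancy of each $f_j$ in every argument, three-times continuous differentiability, strict positivity of densities) are exactly what is needed for \cref{cond:peters} to be well posed and for the bivariate reduction to apply at every disagreement pair.
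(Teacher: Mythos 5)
First, a remark on scope: the paper does not prove \cref{thm:peters} at all --- it is imported verbatim from \citet{peters_2014_identifiability} and used as a black box --- so your attempt can only be compared against the original source's proof. Your overall strategy (argue by contradiction, localize the disagreement between $\cG$ and $\tilde\cG$ to a single ordered pair, condition on a suitable set to obtain a forward and a backward bivariate additive noise model for the same conditional density, and contradict \cref{cond:peters} via the restricted-model property of \cref{def:restricted_scm}) is indeed the strategy of that proof.

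However, there is a genuine gap, and it sits exactly where you flag ``the hard part'': your first and third steps do not actually produce the configuration the argument needs. Picking the earliest node $X_j$ in a topological order of $\cG$ whose parent sets differ, together with an arbitrary $X_i$ in the symmetric difference, does not guarantee (i) that the edge between $X_i$ and $X_j$ is \emph{reversed} between the two graphs --- it may simply be absent in one of them, in which case there is no backward bivariate ANM to contradict and one must instead invoke causal minimality, which is precisely where the non-constancy hypothesis enters (Proposition 17 of \citet{peters_2014_identifiability}); nor (ii) that a single conditioning set $X_S$ exists that contains $X_{\Parents^\cG_j}\setminus\set{X_i}$ \emph{and} $X_{\Parents^{\tilde\cG}_i}\setminus\set{X_j}$ while consisting of non-descendants of the relevant node in \emph{both} graphs. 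Your $X_S$ is built from $\cG$ alone, so conditioning on it does not collapse the $\tilde\cG$-equation for $X_i$ to a clean bivariate ANM with the required conditional noise independence. Supplying (i) and (ii) is the content of a non-trivial graphical lemma (Lemma 38 in \citet{peters_2014_identifiability}), which under causal minimality of both graphs produces a pair $(L,Y)$ with $Y\to L$ in one graph, $L\to Y$ in the other, and a common set $S=(\Parents^\cG_L\setminus\set{Y})\cup(\Parents^{\tilde\cG}_Y\setminus\set{L})$ of joint non-descendants; the restricted-ANM property then applies to this particular $S$ because \cref{def:restricted_scm} quantifies over \emph{all} admissible conditioning sets. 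Without that lemma the reduction to \cref{cond:peters} does not go through, so as written your proposal is an accurate outline of the known proof with its central combinatorial step left unproved.
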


We note that this relation between bivariate and multivariate identifiability was recently exploited for causal discovery with optimal transport by \citet{tu2022optimal}

\paragraph{\textbf{Discussion and multivariate identifiability guarantees of transformers.}} The above theorem states that a restricted ANM is identifiable. According to our Definition \ref{def:restricted_scm}, an additive noise model $X_h \coloneqq f_h(X_{\Parents^\Gx_h}) + N_h, h = 1, \ldots, k$ is \textit{restricted} if each pair of connected nodes $X_i \rightarrow X_j$, for each $X_{\Parents^\Gx_j} \setminus \set{X_i} \subseteq X_S \subseteq X_{\operatorname{ND}_j}^\Gx \setminus \set{X_i, X_j}$ (think of $X_S$ as the set of all possible causes of $X_j$, except $X_i$), can define a bivariate SCM of the form $(f_j(x_{\Parents^\Gx_j \setminus \set{i}}, \cdot), p_{X_i|X_S=x_S}, p_{N_j})$ that satisfies Condition \ref{cond:peters}, i.e. that is identifiable. How does this relate to our findings? Our experiments and analysis of Section \ref{sec:identifiability} validate the hypothesis that transformers align with the theory of identifiability in the case of training and inference on bivariate graphs. Given Theorem \ref{thm:peters}, we know that multivariate identifiability is a property of SCMs where each pair of causes and effects can define a bivariate structural causal model that is itself identifiable: this implies that the empirical guarantees of identifiability we verify for transformers (via CSIvA) on bivariate models must extend to multivariate models. This is apparent by contradiction: say we train a CSIvA architecture that can infer the causal direction of a multivariate linear Gaussian model (which is notoriously non-identifiable). This means that our algorithm can infer the causal direction for each bivariate subgraph consisting of two variables connected according to a linear Gaussian structural equation: this would contradict our experimental results presented in \cref{fig:linear_gauss} and analyzed in \cref{sec:identifiability}.

\section{Further experiments}
In this section, we provide additional experiments on real-world data, on the scaling properties of CSIvA in the number of training samples, and benchmark CSIvA performance in comparison to several well-established or state-of-the-art methods for causal discovery, with identifiability guarantees under different assumptions: DirectLiNGAM \citep{shimizu2011directlingam} for inference on linear non-Gaussian models, CAM \citep{buhlmann2014cam} for inference of additive noise models with additive mechanisms, NoGAM \citep{montagna2023nogam} and GraNDAG \citep{lachapelle2019grandag} for inference on ANMs (the latter is taken from the \textit{continuous} optimization literature of causal discovery, already mentioned in the related works \cref{sec:related_works}).

\subsection{Experiments on real-world datasets}
We consider the accuracy of CSIvA trained on different dataset configurations and tested on real-world datasets. In particular, we perform evaluation on the T\"{u}bingen pairs dataset \citep{mooij15_benchmark}, the Sachs biological dataset \citep{sachs2005causal}, the AutoMPG dataset on cars fuel consumption \citep{bache2013uci} and the Sprinkler dataset, a simple dataset on the causal relations between the binary categorical variables \texttt{rain}, \texttt{sprinkler on/off}, \texttt{wet grass}. Given that our algorithms are trained on bivariate models, from each multivariate dataset we extract all possible two variables subgraphs where this operation does not introduce new confounding effects. This results in $9$ datasets from Sachs, $3$ datasets from AutoMPG, and $2$ datasets from Sprinkler. We consider $102$ pairs from the T\"{u}bingen dataset. 

\paragraph{Real-world generalization of mixed-trained models.} In \cref{fig:real-csiva} we illustrate the average accuracy per dataset type (Sachs, AutoMPG, Sprinkler, T\"{u}bingen) of each CSIvA model. In particular, we want to probe the goodness of mixed training in real-world scenarios. To this end, we train four architectures on the following dataset configurations: linear-mlp, anm-mlp, pnl-mlp, \textit{mixed-mixed}, where the latter denotes the model trained on SCMs with linear, additive nonlinear and post-nonlinear mechanisms, and Beta, Gamma, Gumbel, Exponential, MLP, and Uniform noise distributions. We find the following interesting outcome: the mixed-mixed architecture is on par with the others on the Sprinkler and the Sachs datasets and outperforms the other methods on the AutoMPG and the T\"{u}bingen pairs datasets. Despite these results must be taken cautiously, they provide evidence of a strong result, that mixed training appears to be beneficial even in real-world scenarios, those of actual interest in applications. 

\paragraph{Benchmark with classic causal discovery.} We probe CSIvA test generalization in comparison with DirectLiNGAM, CAM, NoGAM, and GraNDAG methods. According to \cref{fig:real-csiva-et-al}, interestingly we find that the mixed-mixed CSIvA model (trained on SCMs with linear, additive nonlinear and post-nonlinear mechanisms, and Beta, Gamma, Gumbel, Exponential, MLP, Uniform noise distributions) matches with or outperforms the other methods on all the test tasks. This provides additional empirical evidence on the benefits of the mixed training procedure we propose to achieve better test generalization.

\begin{figure}
    \centering
    \includegraphics[width=0.7\linewidth]{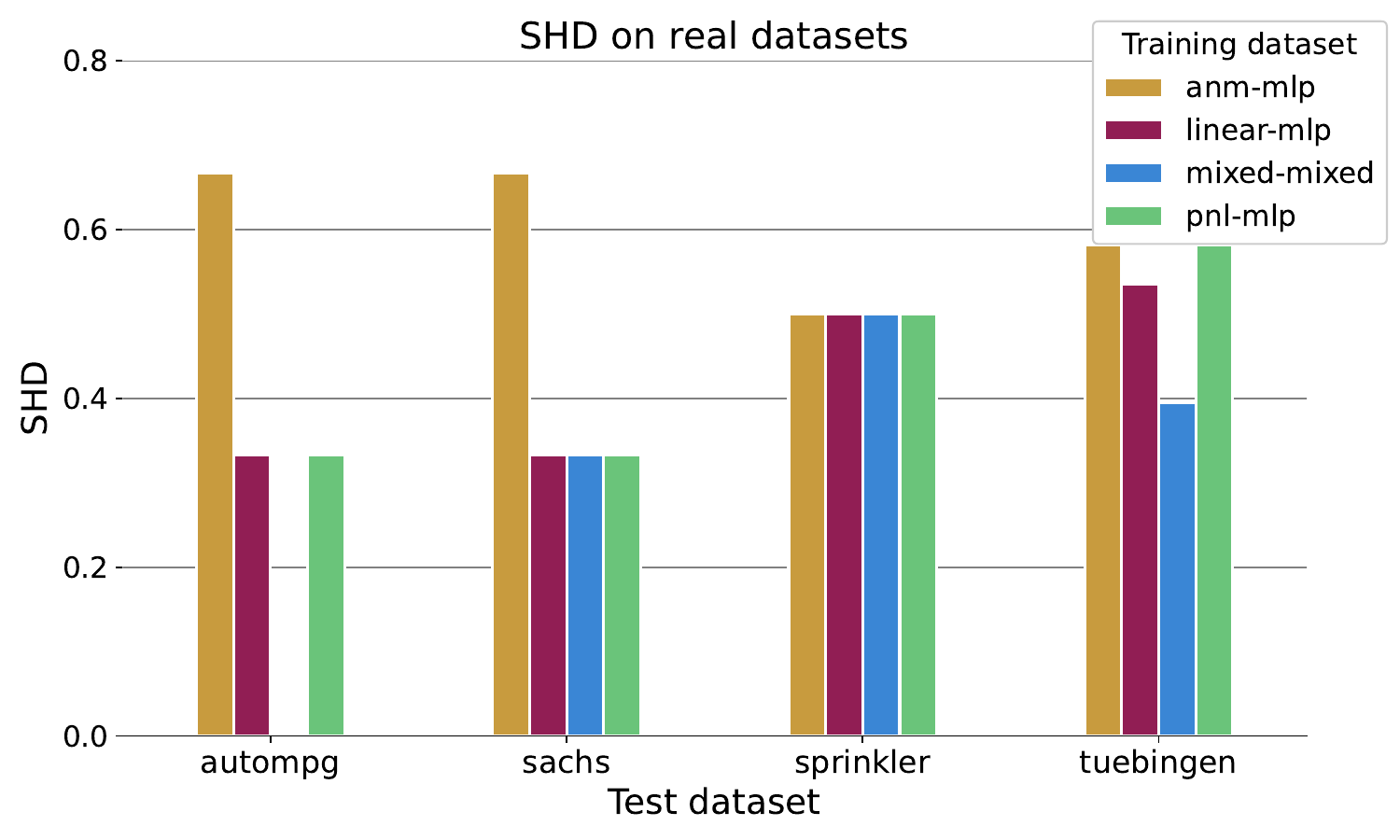}
    \caption{Average SHD (the lower, the better) on real-world datasets of CSIvA models that are trained on synthetic datasets generated with linear, nonlinear additive, and post-nonlinear mechanisms and fixed mlp noise distribution (linear-mlp, anm-mlp, pnl-mlp bars) and \textit{mixed} mechanisms and \textit{mixed} noise distributions (mixed-mixed bar). Performance is tested on bivariate models. We observe that the model optimized with mixed training is on par or outperforms the other algorithms.}
    \label{fig:real-csiva}
\end{figure}
\begin{figure}
    \centering
    \includegraphics[width=0.7\linewidth]{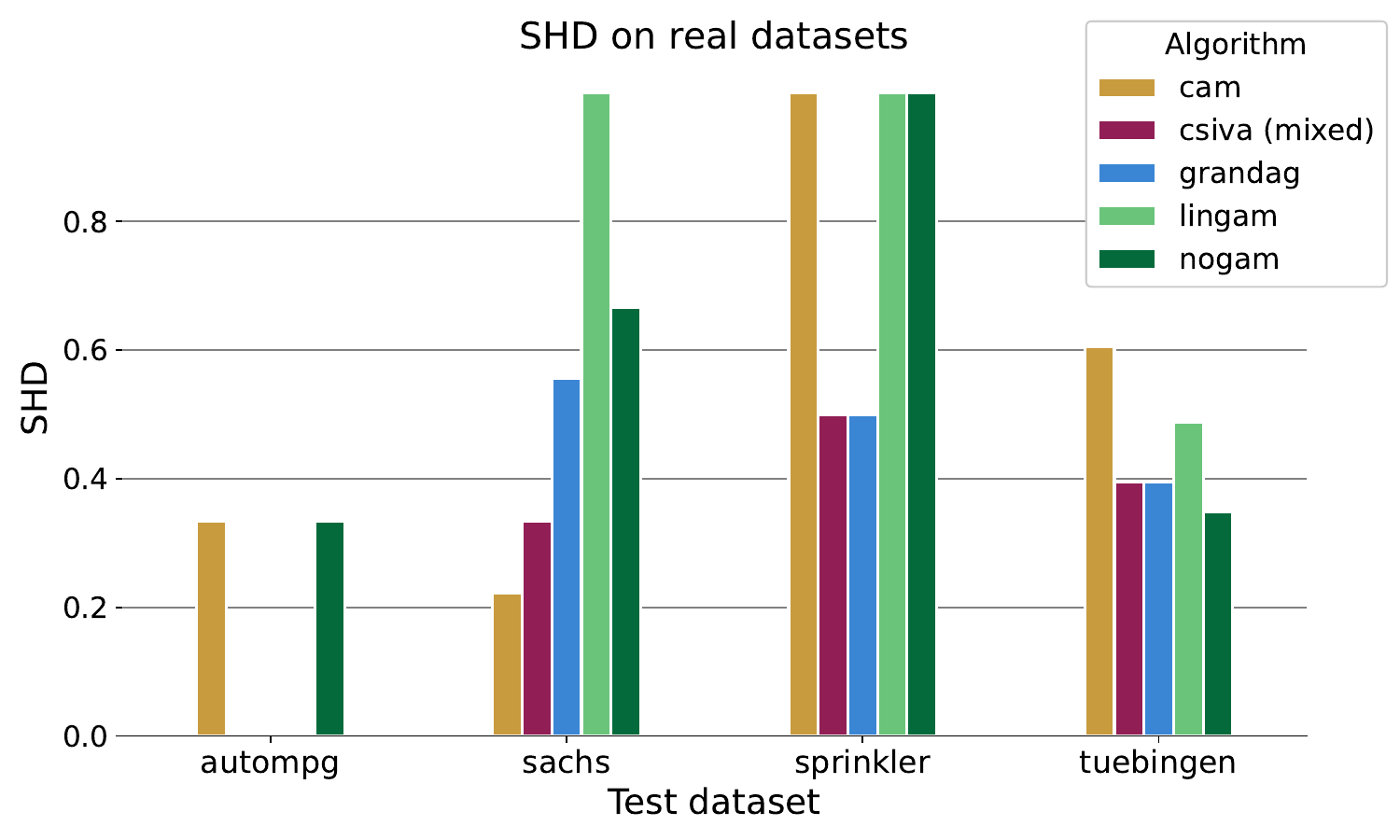}
    \caption{Average SHD (the lower, the better) on real-world datasets. The CSIvA model is trained on synthetic datasets generated with \textit{mixed} mechanisms and \textit{mixed} noise distributions (\textit{csiva (mixed)} bar). As benchmark methods, we consider DirectLiNGAM, CAM, NoGAM, and GraNDAG. Performance is tested on bivariate models. We observe that the model optimized with mixed training is on par or outperforms the other algorithms.}
    \label{fig:real-csiva-et-al}
\end{figure}

\subsection{Benchmarking CSIvA generalization with classical causal discovery algorithms}
In this section, we analyze the results of \cref{fig:simulated-csiva-et-al}, where we compare the CSIvA trained on mixed mechanisms (linear, nonlinear, post-nonlinear) and mixed noises (all noise except for Gaussian) with the benchmark methods DirectLiNGAM, CAM, NoGAM, GraNDAG. Given that we want to probe CSIvA test generalization, we run inference over the following dataset configurations: linear-mixed (i.e. SCMs considering all possible noise distributions, except for Gaussian), anm-mixed, pnl-mixed, and mixed-mlp (i.e. SCMs with linear, nonlinear, post-nonlinear mechanisms). \cref{fig:simulated-csiva-et-al} shows results in line with our expectations: DirectLiNGAM, CAM, NoGAM, and GraNDAG achieve their best accuracy on data generated by SCMs respecting their assumptions, while degrading their performance on the other models; the CSIvA architecture trained on a mixture of SCMs with different mechanisms and noise distributions matches with or tops all other methods, in all the considered settings (while being outperformed on the anm and linear data, CSIvA still retains good average SHD accuracy). 

\begin{figure}
    \centering
    \includegraphics[width=0.7\linewidth]{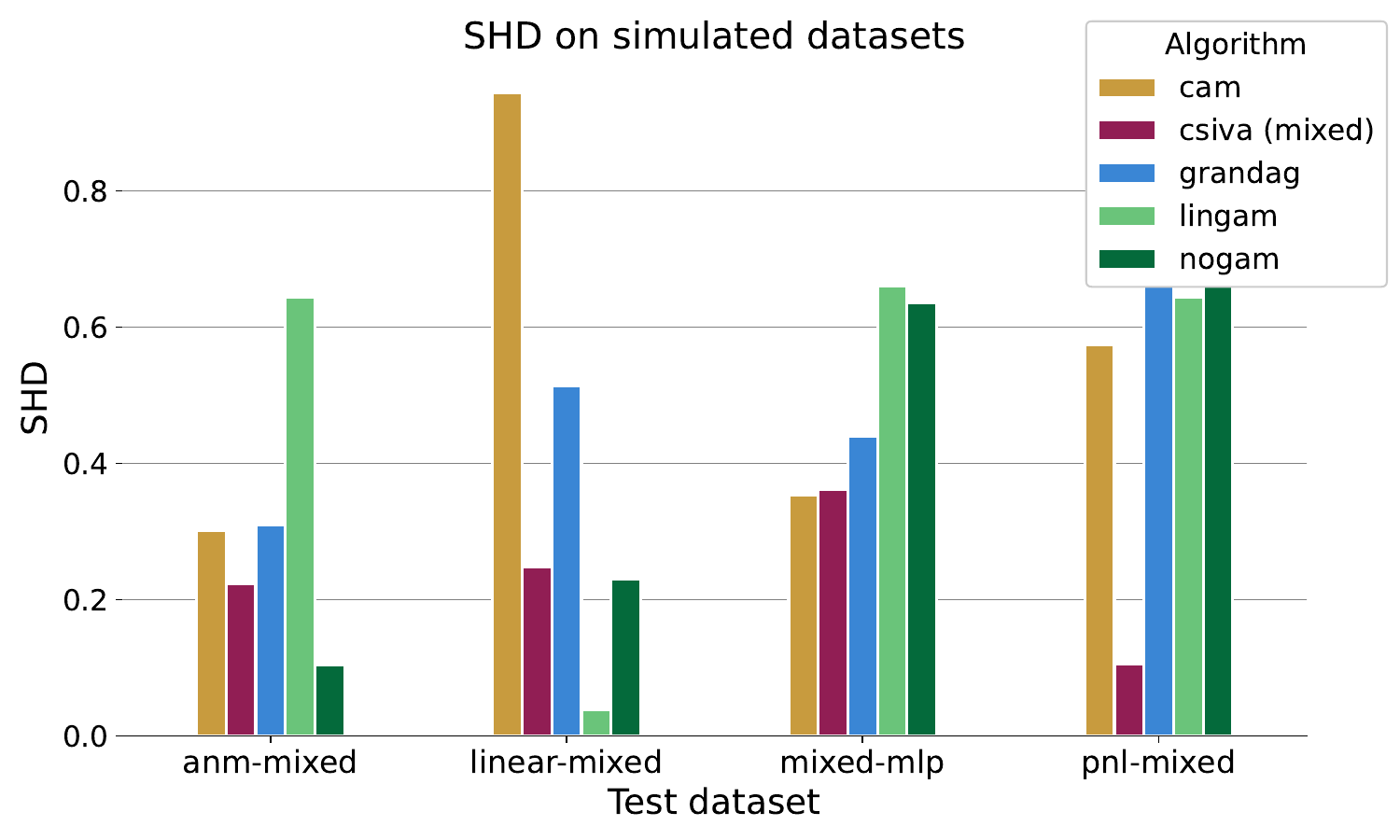}
    \caption{Average SHD (the lower, the better) on simulated datasets. The CSIvA model is trained on synthetic datasets generated with \textit{mixed} mechanisms and \textit{mixed} noise distributions (\textit{csiva (mixed)} bar). As benchmark methods, we consider DirectLiNGAM, CAM, NoGAM, and GraNDAG. Performance is tested on bivariate models. We observe that, in general, the model optimized with mixed training is on par or outperforms the other algorithms.}
    \label{fig:simulated-csiva-et-al}
\end{figure}

\subsection{Experiments with different sizes of the training dataset}
In this section, we explore how CSIvA test generalization scales when training occurs on different numbers of training samples. In the experiments on the main manuscript, each algorithm is optimized on $15000$ datasets, where each dataset and the underlying causal graph corresponds to a training data point. We now compare the test SHD when training occurs on $5000$ and $10000$ datasets. One clear point emerges from the results of \cref{fig:data-size}, that is our results on the benefits of the mixed training procedure are consistent for each size of the training dataset we considered. Moreover, we note that the performance of CSIvA does not appear to degrade due to the decrease in the number of training points.

\begin{figure}
    \centering
    \begin{subfigure}{0.49\linewidth}
        \centering
        \scalebox{0.25}{\includegraphics{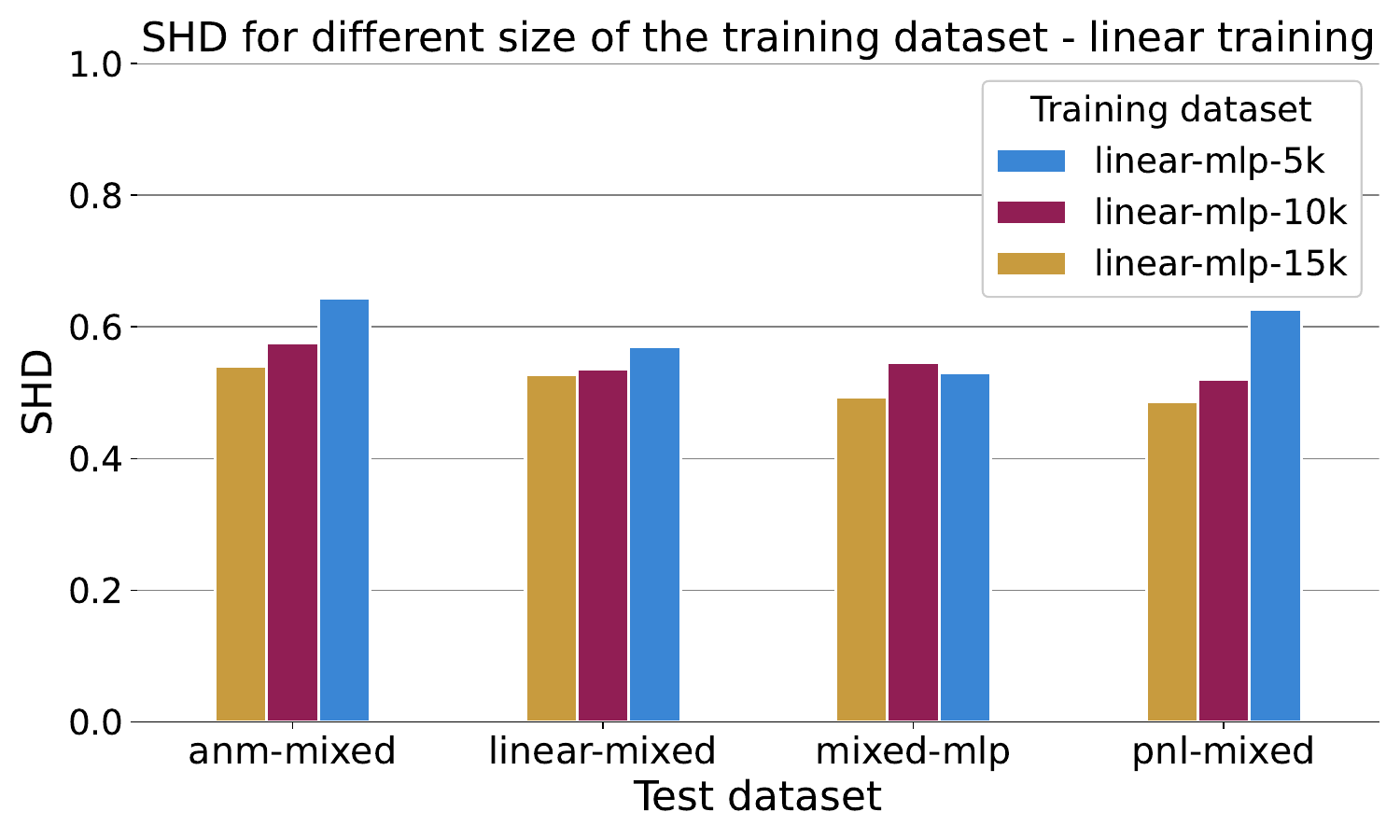}}
        \caption{LiNGAM training data (mlp noise)}
    \end{subfigure}
    \begin{subfigure}{0.49\linewidth}
        \centering
        \scalebox{0.25}{\includegraphics{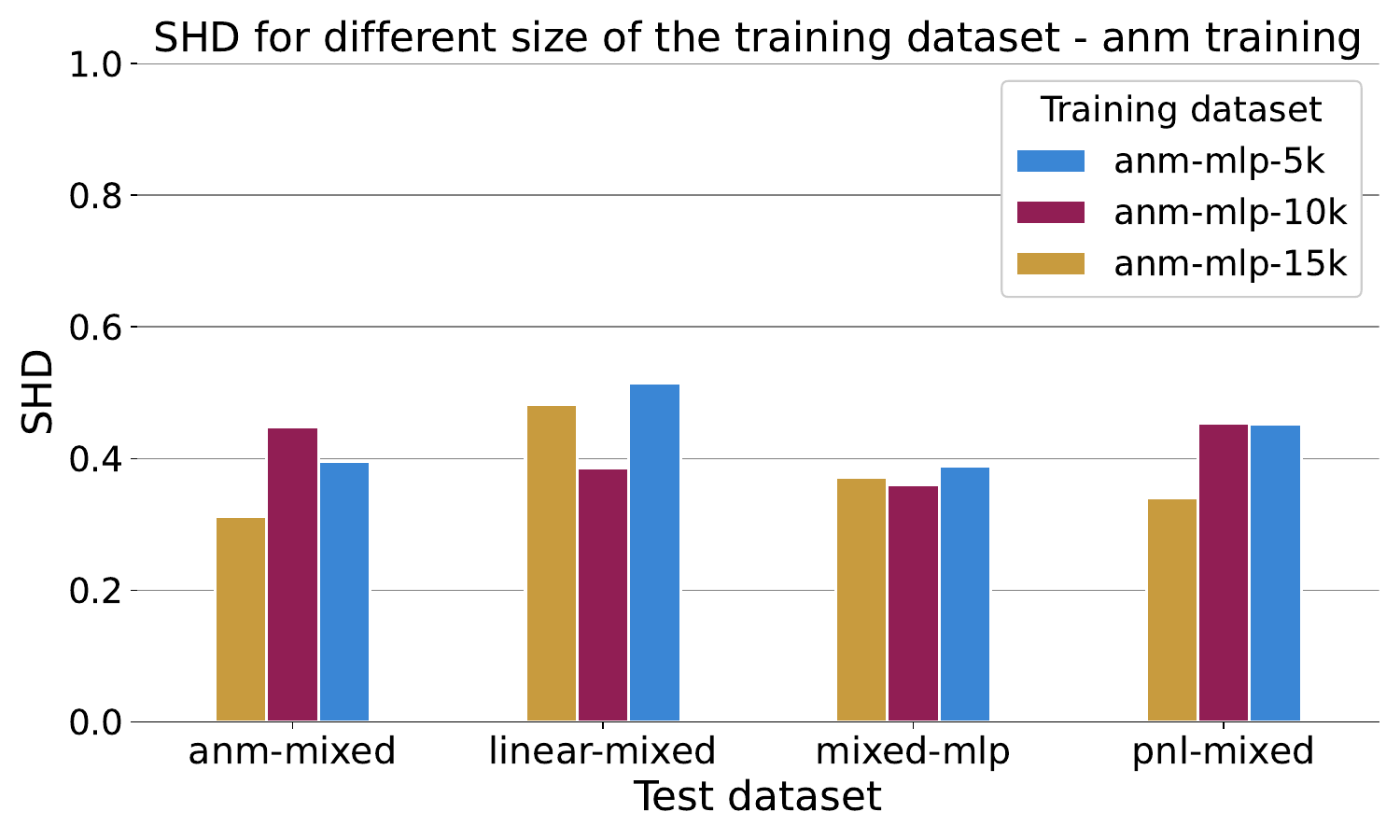}}
        \caption{ANM training data (mlp noise)}
    \end{subfigure}
    \\
    \begin{subfigure}{0.49\linewidth}
        \centering
        \scalebox{0.25}{\includegraphics{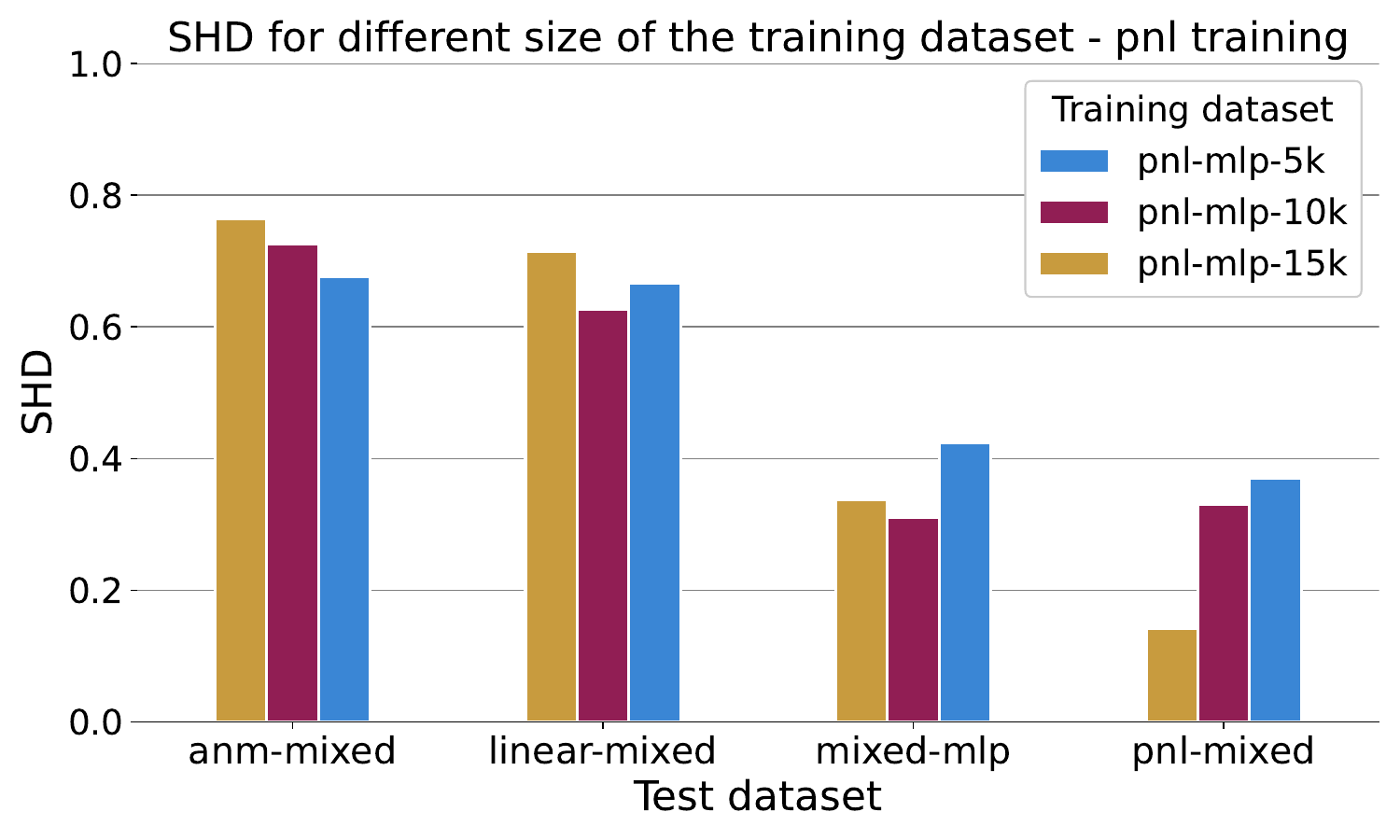}}
        \caption{PNL training data (mlp noise)}
    \end{subfigure}
    \begin{subfigure}{0.49\linewidth}
        \centering
        \scalebox{0.25}{\includegraphics{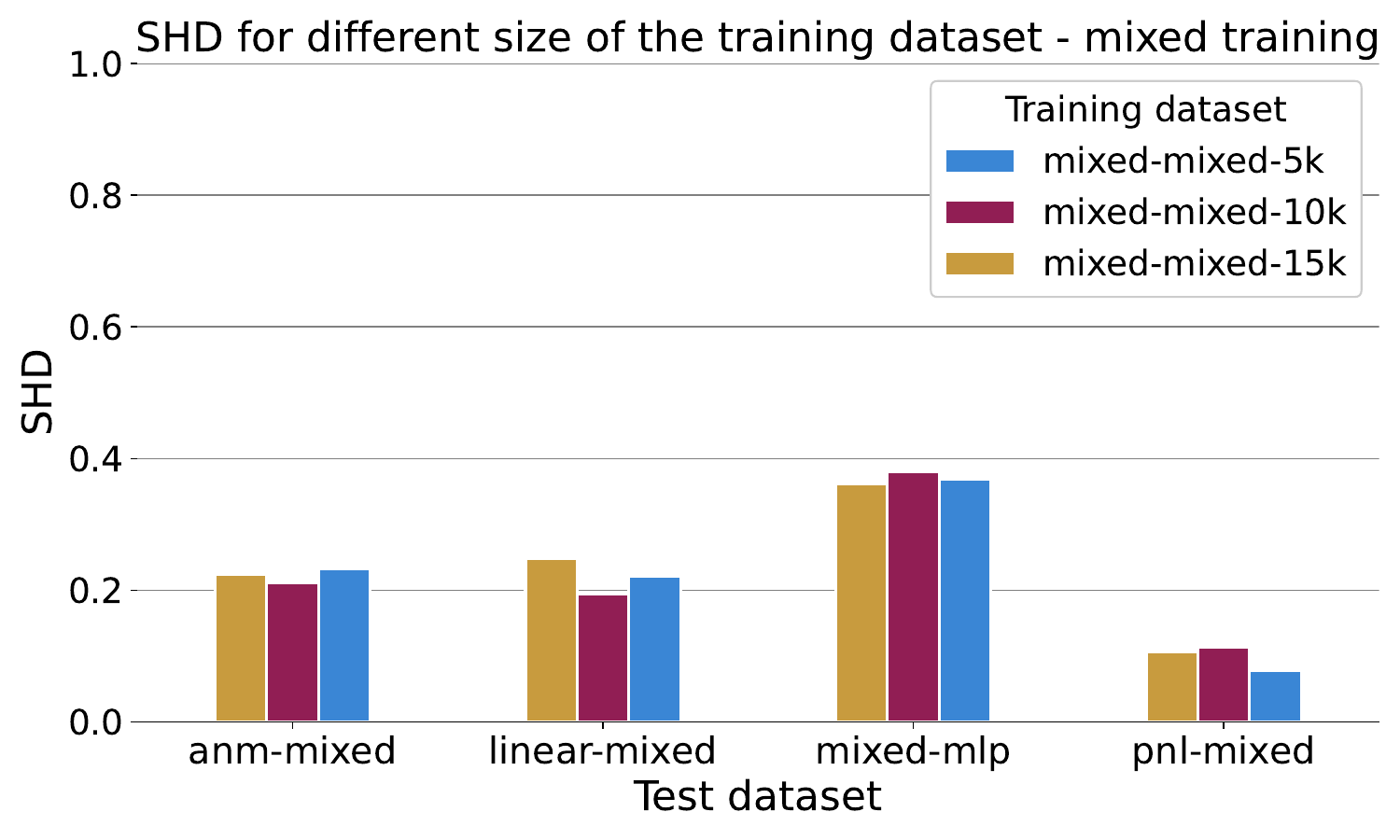}}
        \caption{Mixed mechanisms and noise training data}
    \end{subfigure}
    \caption{Average SHD (the lower, the better) of CSIvA models trained with $5000$, $10000$, $15000$ data points. The algorithms are tested on simulated datasets generated with linear, nonlinear and post-nonlinear mechanisms (linear-mixed, anm-mixed, pnl-mixed entries on the x axis) and mixed-mlp datasets, generated with \textit{mixed} mechanism types and fixed mlp noise distribution. We observe that (i) the mixed training improves the test generalization, irrespective of the training dataset size; (ii) CSIvA maintains its performance stable across different training dataset sizes.}
    \label{fig:data-size}
\end{figure}

\subsection{Can we learn to infer causal order from linear Gaussian data?}
We ask whether CSIvA trained on non-identifiable models can implicitly learn to predict the causal direction of identifiable SCMs. For this purpose, we consider CSIvA optimized on linear Gaussian data and test its performance on several datasets sampled from structural causal models with different configurations of mechanisms and noise distributions: linear-mixed (with noise terms sampled according to all distributions except for Gaussian), anm-mixed, pnl-mixed, mixed-mlp (with mechanisms generated according to linear, nonlinear, post-nonlinear equations), anm-gauss, and pnl-gauss. The results of \cref{fig:linear-gauss-test} present strong evidence that models trained on non-identifiable SCMs can not infer the causal order: in fact, we see that consistently across all datasets CSIvA average SHD approximates $0.5$, the performance of classification with a coin flip. This is in line with our expectations. In agreement with our motivating hypothesis (\cref{sec:experiments}), in \cref{sec:identifiability} we have empirically shown that CSIvA can model the class of the SCM generating the observed data and exploit this information to infer the correct causal DAG (instead of a less specific Markov equivalence class) when this is identifiable. Moreover, in our Section \ref{sec:in-out-generalization} our experiments show that CSIvA can not generalize to SCM classes unseen during training. In light of these findings, it is intuitive that an architecture trained on non-identifiable linear Gaussian data can only try to fit a linear Gaussian model, irrespective of the input data. Then, when inferring the causal direction, given that CSIvA assumes the data to be generated according to a linear Gaussian SCM, both the forward and backward directions are equally plausible, which explains the observed SHD close to $0.5$.

\begin{figure}
    \centering
    \includegraphics[width=0.6\linewidth]{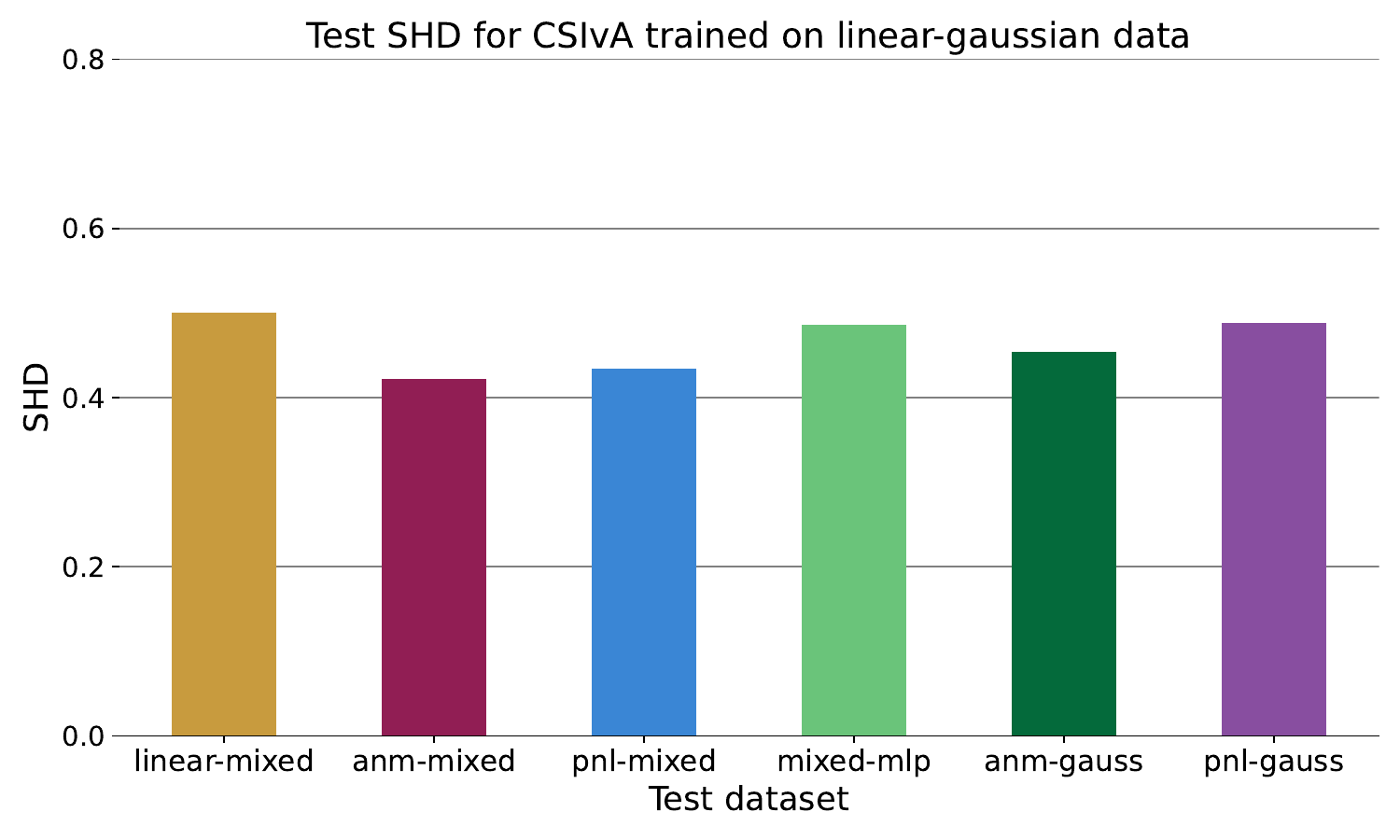}
    \caption{Average SHD (the lower, the better) of CSIvA trained on datasets generated by linear Gaussian models, which are non-identifiable. Performance is tested on simulated datasets generated according to several SCM configurations. We observe that training on non-identifiable data yields an algorithm that performs with average accuracy of $0.5$, equivalent to a coin flip random baseline, across all the test tasks.}
    \label{fig:linear-gauss-test}
\end{figure}

\subsection{Experiments with bivariate independent graphs}
In the main manuscript, we consider training and testing of CSIvA on bivariate graphs with an edge: $X \rightarrow Y$, $Y \rightarrow X$. This can be phrased as a classification problem with two labels. We motivate our choice by noticing that, in the bivariate setting, identifiability is a property of connected graphs: the empty graph with no edge defines a Markov equivalence class with one element, i.e. a singleton. This is known to be identifiable without explicit assumptions on the functional form of the mechanisms or the noise term distributions in the causal model. The goal of this section is to show that, if we include datasets generated according to an empty graph in the training procedure, CSIvA can learn to disambiguate between the three classes (the empty graph, $X \rightarrow Y$, $Y \rightarrow X$). To motivate our claim, we notice that classifying empty and connected graphs can be done by testing independence between the input variables: previous works phrase independence testing as a classification task and show that this can be learned via deep neural networks \citep{bellot2019indeptest, rajat2017indeptest}. 
The experiments of \cref{fig:empty_graph} sustain our claim. We consider three CSIvA architectures, each trained on independent pairs and one between linear, nonlinear, or post-nonlinear data. Our results show that the neural network can learn to disambiguate between the three classes in all scenarios. 

\begin{figure}
    \centering
    \includegraphics[width=0.5\linewidth]{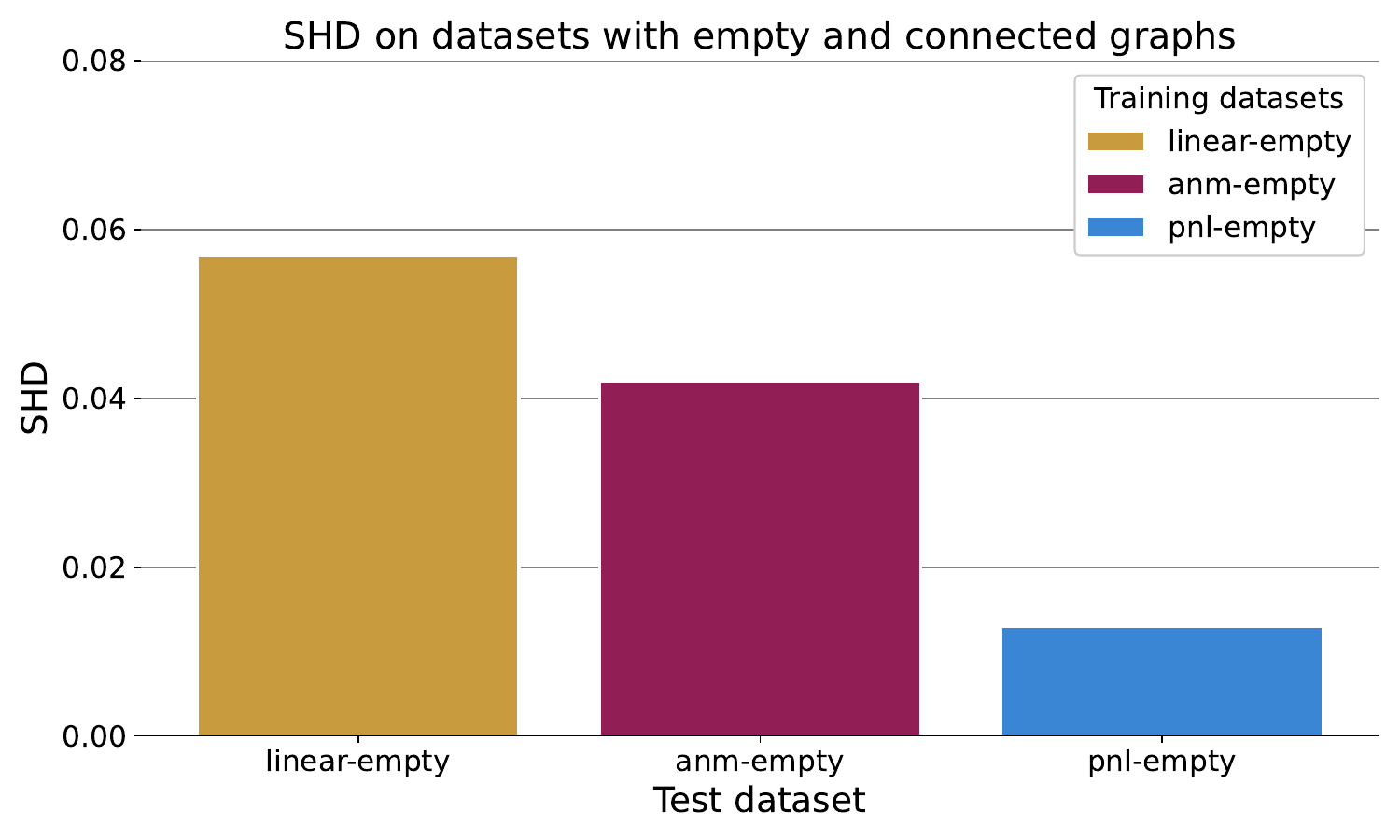}
    \caption{Average SHD (the lower, the better) for CSIvA trained on independent pairs and one between linear, nonlinear, or post-nonlinear data. Each algorithm is tested on the same class of structural causal models it was trained on. We note that in all three scenarios, CSIvA learns to distinguish all classes with almost optimal accuracy (i.e., SHD close to $0$.}
    \label{fig:empty_graph}
\end{figure}

\subsection{Mixed training with unlimited budget}
We present our experimental results on one further question, to help clarify the results in the main text of the paper. We aim to understand when to make tradeoffs between computational resources, and having models that have been trained on a wider variety of SCMs. We compare training on multiple SCMs to single-SCM training, when all models see the same amount of training data from each SCM type as a non-mixed model (i.e. a mixed network trains on $15,000$ linear datasets and $15,000$ PNL datasets, instead of $15,000$ divided between the two SCM types). 


In the main text of this paper, we compare neural networks trained on a mix of structural causal models (e.g. noise distributions, or mechanism types), to models trained on a single mechanism-noise combination, where all models have the same amount of training data, $15,000$ datasets. In mixed training, we split these evenly, so a "lin, nl" model is trained on $7,500$ datasets from linear SCMs, and $7,500$ from nonlinear SCMs. Our results in this framework are promising, and show that for many combinations of SCM types, we can train one model instead of two, and achieve good progress, while making a $50\%$ savings on training costs. However, if our training budget is high/unlimited, we should also ask whether we can we achieve the same performance as a model trained on a single SCM type. \cref{fig:amortized-noise-ds-size} shows good results in this direction - the models trained with the same number of datasets per SCM type as an unmixed model had similar (or even better, for PNL data) performance as the un-mixed model trained on the same SCM type as the test data. These mixed models are also significantly more useful than having 2 or 3 separate models per SCM type, as they have good across-the-board performance. However, if we used the same computational resources to train 3 separate networks (one for each mechanism type) and wanted to use them for causal discovery on a dataset with unknown assumptions, we would be left with the rather difficult task of deciding which model to trust.  
\begin{figure}[h]
\def\amortaxheight{1.332in}
\def\amortscale{1.0}

\centering   

    \begin{subfigure}[b]{\fpeval{\amortscale \textwidth}pt}
        \centering
        \scalebox{\amortscale}{\includegraphics{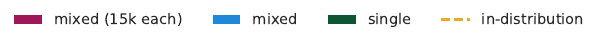}}
    \end{subfigure}
    
    \begin{subfigure}[b]{\fpeval{0.322*\amortscale \textwidth}pt}
        \scalebox{\amortscale}{\includegraphics[height=\amortaxheight]{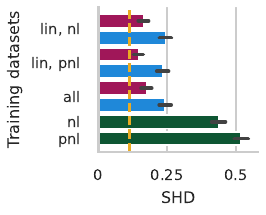}}
        \caption{Linear test data}
        \label{fig:ds-noise-mixing-linear}
    \end{subfigure}%
     \begin{subfigure}[b]{\fpeval{0.288*\amortscale \textwidth}pt}
        \scalebox{\amortscale}{\includegraphics[height=\amortaxheight]{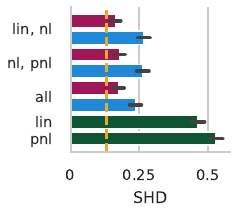}}
        \caption{Nonlinear test data}
        \label{fig:ds-noise-mixing-nonlinear}
    \end{subfigure}%
     \begin{subfigure}[b]{\fpeval{0.293*\amortscale \textwidth}pt}
        \scalebox{\amortscale}{\includegraphics[height=\amortaxheight]{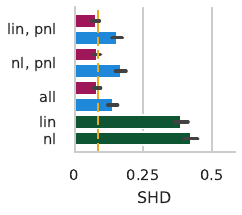}}
        \caption{PNL test data}
        \label{fig:ds-noise-mixing-pnl}
    \end{subfigure}
    \caption{Mixtures of causal mechanisms, with varying amounts of training data. We train eight models on  samples from structural casual models with
different mechanisms. Four (in purple), were trained on $15,000$ samples for each SCM type (so the "lin,nl" model saw $30,000$ samples in total, and the "all" model saw $45,000$), and the other four (blue) are the same as in \cref{fig:amortized-scm}, and were trained on $15,000$ samples in total, evenly split between the SCM types they were trained on. We compare their test SHD (the lower, the better) against networks trained on
datasets generated according to a single type of mechanism. The dashed line indicates the test SHD of a model trained on samples with the same mechanisms as the test SCM. Training on multiple causal models with different
mechanisms (mixed bars) always improves performance compared to training on single SCMs.
    }
    \label{fig:amortized-noise-ds-size}
\end{figure}

\subsection{Experiments with Gaussian process nonlinear mechanisms}\label{app:gp_results}

In this section we present results obtained training and testing CSIvA on synthetic data with nonlinear mechanisms sampled from a Gaussian process with a unit bandwidth RBF kernel (we call data generated according to this approach as \textit{GP-data}). In particular, for each variable $X_i$ node of the graph $\mathcal{G}$ generated according to model \eqref{eq:scm} we define the nonlinear mechanism $f_i(X_{\operatorname{PA}_i^\mathcal{G}}) = \mathcal{N}(\mathbf{0}, K(X_{\operatorname{PA}_i^\mathcal{G}}, X_{\operatorname{PA}_i^\mathcal{G}}))$, a multivariate normal distribution centered at zero and with covariance matrix as the Gaussian kernel $K(X_{\operatorname{PA}_i^\mathcal{G}}, X_{\operatorname{PA}_i^\mathcal{G}})$, where $X_{\operatorname{PA}_i^\mathcal{G}}$ are the observations of the parents of the node $X_i$.
Together with our strategy adopted in the experiments in the main text of parametrizing nonlinearities with random neural networks (\textit{NN-data}), this is one of the most common approaches in the literature. 

\paragraph{GP-data generation: literature review.} We present an extensive list of works adopting Gaussian processes for the sampling of nonlinear mechanisms: \citet{rolland2022score, montagna2023assumptions, montagna2023shortcuts, montagna2023nogam, buhlmann2014cam, mooij15_benchmark, lachapelle2019grandag, wang21_ordering, chen2023_iscan, mooij2011cyclic, monti2019ica}. This suggests that our data generation strategy is established in the causality literature.

\paragraph{Summary of the GP-data experiments.} Figures from \ref{fig:gp-warmup} to \ref{fig:gp-amortized-noise} replicate the main text experiments involving nonlinear mechanisms either in the training or testing data. The results on GP-data agree with our findings on NN-data: CSIvA still shows poor OOD generalization under different training and test mechanisms, and generally for different training and testing noise distribution (except for PNL data). Similar to the case with NN-data, test generalization improves under mixed training.

\begin{figure}

\def\indistaxheight{2.2in}
\def\indistscale{.95}

\centering   

    \begin{subfigure}[b]{\fpeval{\indistscale \textwidth}pt}
        \centering
        \scalebox{\indistscale}{\includegraphics{./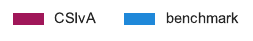}}
    \end{subfigure}
    
    \begin{subfigure}[b]{\fpeval{0.352*\indistscale \textwidth}pt}
        \scalebox{\indistscale}{\includegraphics[height=\indistaxheight]{./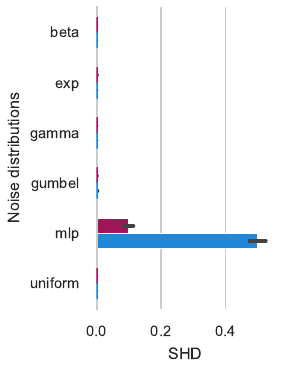}}
        \caption{Linear}
        \label{fig:gp-indist-linear}
    \end{subfigure}%
    \begin{subfigure}[b]{\fpeval{0.248*\indistscale \textwidth}pt}
        \scalebox{\indistscale}{\includegraphics[height=\indistaxheight]{./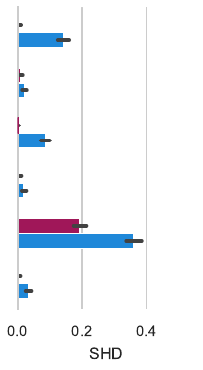}}
        \caption{Nonlinear}
        \label{fig:gp-indist-nonlinear}
    \end{subfigure}%
    \begin{subfigure}[b]{\fpeval{0.253*\indistscale \textwidth}pt}
        \scalebox{\indistscale}{\includegraphics[height=\indistaxheight]{./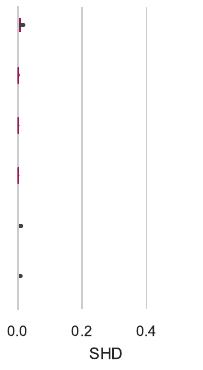}}
        \caption{PNL$^{\textnormal{\ref{ft:pnl}}}$}
        \label{fig:gp-indist-pnl}
     \end{subfigure}%
    \caption{In-distribution generalization (GP-data) of CSIvA trained and tested on data generated according to the same structural causal models, fixing mechanisms, and noise distributions between training and testing. Nonlinear mechanisms for \textit{nonlinear} and \textit{pnl} data are sampled from a Gaussian process. As baselines for comparison, we use DirectLiNGAM on linear SCMs and NoGAM on nonlinear ANM (we use their  \href{https://causal-learn.readthedocs.io/en/latest/index.html}{causal-learn} and \href{https://www.pywhy.org/dodiscover/dev/generated/dodiscover.toporder.NoGAM.html}{dodiscover} implementations). CSIvA performance is clearly non-trivial and generalizing well.} 
    \label{fig:gp-warmup}
\end{figure}

\begin{figure}
\def\outdistaxheight{1.62in}
\def\outdistscale{1.0}
\centering

    \begin{subfigure}[b]{\fpeval{0.487*\outdistscale \textwidth}pt}
    \centering
        \scalebox{\outdistscale}{\includegraphics[height=\outdistaxheight]{./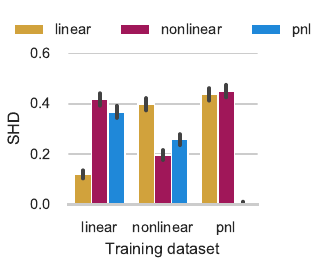}}
        \caption{Mechanisms}
        \label{fig:gp-ood-mechanisms}
    \end{subfigure}%
    \begin{subfigure}[b]{\fpeval{0.415*\outdistscale \textwidth}pt}
    \centering
        \scalebox{\outdistscale}
        {\includegraphics[height=\outdistaxheight]{./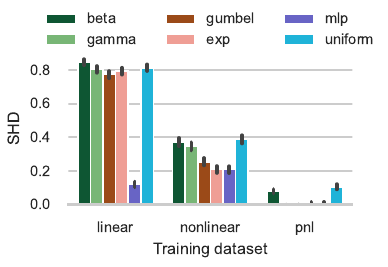}}
        \caption{Noise distributions}
        \label{fig:gp-ood-noise}
    \end{subfigure}%

    \caption{Out-of-distribution generalisation (GP-data). We train three CSIvA models on data sampled from SCMs with linear, nonlinear additive, and post-nonlinear mechanisms; and fixed \textit{mlp} noise distribution. Nonlinear mechanisms for \textit{nonlinear} and \textit{pnl} data are sampled from a Gaussian process. In Figure \ref{fig:gp-ood-mechanisms} we test across different mechanism types, with mlp-distributed noise terms both in test and training.  In Figure \ref{fig:gp-ood-noise} we test across different noise distributions, with test mechanism types fixed from training. CSIvA struggles to generalize to unseen causal mechanisms and often displays degraded performance over new noise distributions.
    }
    \label{fig:gp-ood}
\end{figure}

\begin{figure}
\def\scmmixaxheight{1.3in}
\def\scmmixscale{1.0}

\centering   

    \begin{subfigure}[b]{\fpeval{\scmmixscale \textwidth}pt}
        \centering
        \scalebox{\scmmixscale}{\includegraphics{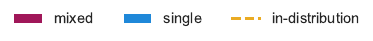}}
    \end{subfigure}
    
    \begin{subfigure}[b]{\fpeval{0.322*\scmmixscale \textwidth}pt}
        \scalebox{\scmmixscale}{\includegraphics[height=\scmmixaxheight]{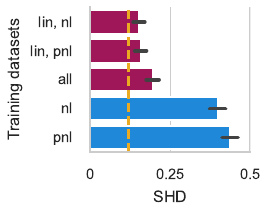}}

         \caption{Linear test data}
         \label{fig:gp-scm-mixing-linear}
     \end{subfigure}%
    \begin{subfigure}[b]{\fpeval{0.288*\scmmixscale \textwidth}pt}
        \scalebox{\scmmixscale}{\includegraphics[height=\scmmixaxheight]{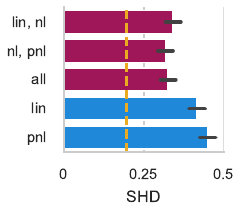}}
         \caption{Nonlinear test data}
         \label{fig:gp-scm-mixing-nonlinear}
     \end{subfigure}%
    \begin{subfigure}[b]{\fpeval{0.293*\scmmixscale \textwidth}pt}
        \scalebox{\scmmixscale}{\includegraphics[height=\scmmixaxheight]{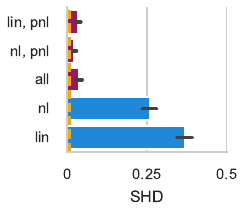}}
         \caption{PNL test data}
         \label{fig:gp-scm-mixing-pnl}
     \end{subfigure}%
    \caption{Mixture of causal mechanisms (GP-data). We train four models on samples from structural casual models with different mechanism types. Nonlinear mechanisms for \textit{nonlinear} and \textit{pnl} data are sampled from a Gaussian process. We compare their test SHD (the lower, the better) against networks trained on datasets generated according to a single type of mechanism. The dashed line indicates the test SHD of a model trained on samples with the same mechanisms as test SCM. Training on multiple causal models with different mechanisms (\textit{mixed} bars) always improves performance compared to training on single SCMs.
    }
    \label{fig:gp-amortized-scm}
\end{figure}

\begin{figure}
\def\noiseaxheight{2.1in}
\def\noisescale{.9}

\centering   

    \begin{subfigure}[b]{\fpeval{\noisescale \textwidth}pt}
        \centering
        \scalebox{\noisescale}{\includegraphics{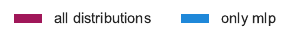}}
    \end{subfigure}
    
    \begin{subfigure}[b]{\fpeval{0.352*\noisescale \textwidth}pt}
        \scalebox{\noisescale}{\includegraphics[height=\noiseaxheight]{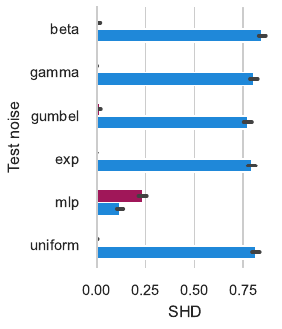}}
        \caption{Linear test data}
        \label{fig:gp-noise-mixing-linear}
    \end{subfigure}%
     \begin{subfigure}[b]{\fpeval{0.253*\noisescale \textwidth}pt}
        \scalebox{\noisescale}{\includegraphics[height=\noiseaxheight]{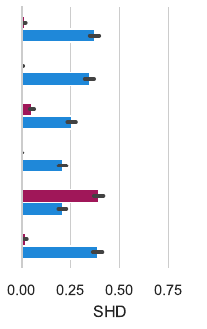}}
        \caption{Nonlinear test data}
        \label{fig:gp-noise-mixing-nonlinear}
    \end{subfigure}%
     \begin{subfigure}[b]{\fpeval{0.253*\noisescale \textwidth}pt}
        \scalebox{\noisescale}{\includegraphics[height=\noiseaxheight]{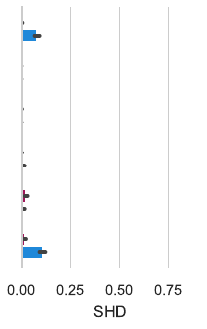}}
        \caption{PNL test data}
        \label{fig:gp-noise-mixing-pnl}
    \end{subfigure}
    \caption{Mixture of noise distributions (GP-data). We train three networks on samples from SCMs with different noise terms distributions and fixed mechanism types: linear, nonlinear, and post-nonlinear. Nonlinear mechanisms for \textit{nonlinear} and \textit{pnl} data are sampled from a Gaussian process. We present their test SHD (the lower, the better) on data from SCMs with the mechanisms fixed with respect to training, and noise terms changing between each dataset. Training on multiple causal models with different noises (\textit{all distributions} bars) always improves performance compared to training on single SCMs with fixed mlp noise (\textit{only mlp} bars).
    }
    \label{fig:gp-amortized-noise}
\end{figure}

\section{Theoretical results and proofs}\label{app:main_thm}
In this section, we state and prove the identifiability of the post-ANM discussed in \cref{sec:identifiability}, as a corollary of Theorem 1 of \citet{hoyer08_anm}. The forward and backward models of equations \eqref{eq:forward} and \eqref{eq:backward} for the pair of random variables $X,Y$ is given by:
\begin{align*}
    &Y = f_2(f_1(X) + N_Y) = f_2(\tilde{Y}), \hspace{1em} \tilde{Y} \coloneqq f_1(X) + N_Y,\\
    &X = g_2(g_1(Y) + N_X) = g_2(\tilde{X}), \hspace{1em} \tilde{X} \coloneqq g_1(Y) + N_X,
\end{align*}
with $f_2, g_2$ invertible functions, $N_Y, X$ independent random variables, and $N_X, Y$ independent random variables. Equivalently, we can frame forward and backward causal models for $\tilde X, \tilde Y$, as in equations \eqref{eq:tilde_forward} and \eqref{eq:tilde_backward}:
\begin{align*}
    \tilde{Y} = h_Y(\tilde{X}) + N_Y, \hspace{1em} h_Y \coloneqq f_1 \circ g_2, \\
    \tilde{X} = h_X(\tilde{Y}) + N_X, \hspace{1em} h_X \coloneqq g_1 \circ f_2. 
\end{align*}
We are now ready to provide our identifiability statement for post-ANMs.

\begin{proposition}[Corollary of Theorem 1 of \citet{hoyer08_anm}]\label{prop:main}
        \looseness-1Let $p_{N_Y}, h_X, h_Y$ be fixed, and define $\nu_Y \coloneqq \log p_{N_Y}$, $\xi \coloneqq \log p_{\tilde{X}}$. Suppose that $p_{N_Y}$ and $ p_{\tilde{X}}$ are strictly positive densities, and that $\nu_Y, \xi, f_1, f_2, g_1,$ and $g_2$ are three times differentiable. Further, assume that for a fixed pair $h_Y, \nu_Y$ exists $\tilde y \in \R$ s.t. $\nu''_{\mathsmaller Y}(\tilde y - h_Y(\tilde x))h'_Y(\tilde x) \neq 0$ is satisfied for all but a countable set of points $\tilde x \in \R$. Then, the set of all densities $p_{\tilde X}$ of $\tilde X$ such that both equations \eqref{eq:forward} and \eqref{eq:backward} are satisfied is contained in a 2-dimensional space.
\end{proposition}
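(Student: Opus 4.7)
The plan is to reduce the statement to Theorem 1 of \citet{hoyer08_anm} for ordinary additive noise models, using the reparametrization already introduced in equations \eqref{eq:tilde_forward} and \eqref{eq:tilde_backward}. Since $f_2$ and $g_2$ are invertible, we have $\tilde X = g_2^{-1}(X)$ and $\tilde Y = f_2^{-1}(Y)$, and the joint law of $(\tilde X, \tilde Y)$ simultaneously satisfies the forward ANM $\tilde Y = h_Y(\tilde X) + N_Y$ with $N_Y \perp \tilde X$, and the backward ANM $\tilde X = h_X(\tilde Y) + N_X$ with $N_X \perp \tilde Y$. The independence statements transfer from the original variables because $g_2$ and $f_2$ are deterministic bijections, so $N_Y \perp X$ is equivalent to $N_Y \perp \tilde X$ and similarly for the backward model. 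Hence every post-ANM density compatible with both directions corresponds, through the smooth change of variables induced by $f_2$ and $g_2$, to an ANM density on $(\tilde X, \tilde Y)$ compatible with both directions.

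Next, I would verify the regularity needed by Hoyer's theorem. Three-times differentiability of $f_1, f_2, g_1, g_2$ ensures that $h_Y = f_1 \circ g_2$ and $h_X = g_1 \circ f_2$ are $C^3$, and strict positivity and $C^3$ smoothness of $p_{N_Y}$ and $p_{\tilde X}$ guarantees that $\nu_Y$ and $\xi$ are well defined and three times differentiable on their supports. The genericity assumption $\nu_Y''(\tilde y - h_Y(\tilde x))h_Y'(\tilde x) \neq 0$ on all but a countable set of $\tilde x$ is exactly the non-degeneracy hypothesis of Theorem 1 in \citet{hoyer08_anm} applied to the pair $(\tilde X, \tilde Y)$ with mechanism $h_Y$ and noise $N_Y$.

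At this point the core step is to apply Hoyer's theorem directly: for fixed $(h_Y, \nu_Y)$, the log-density $\xi$ of $\tilde X$ for which the reformulated ANM admits a backward ANM with independent noise must satisfy a linear, third-order ordinary differential equation derived from differentiating the log-density factorization $\xi(\tilde x) + \nu_Y(\tilde y - h_Y(\tilde x)) = \log p_{\tilde X, \tilde Y}(\tilde x, \tilde y)$ and imposing that the analogous backward factorization holds. The solution space of this ODE is finite-dimensional, and Hoyer's analysis shows the admissible $\xi$ lie in a space of dimension at most two once the density-normalization constraint is absorbed. Transporting this bound back through the bijection $X = g_2(\tilde X)$ yields the claim for $p_{\tilde X}$ within the post-ANM framework.

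The main obstacle I expect is not the ODE argument itself, which is inherited verbatim from \citet{hoyer08_anm}, but the careful bookkeeping needed to establish the equivalence between post-ANM reversibility and ANM reversibility on the reparametrized variables. Specifically, one has to make sure that (i) the noise-independence conditions for the post-ANM really do match those of the ANM on $(\tilde X, \tilde Y)$, (ii) the change of variables $\tilde X = g_2^{-1}(X)$ preserves the "generic non-degeneracy'' hypothesis on the pair $(h_Y, \nu_Y)$, and (iii) two-dimensionality of the admissible $\xi$ translates to two-dimensionality of admissible $p_{\tilde X}$ (the proposition states this in terms of $p_{\tilde X}$, so no further pullback is required). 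Once this bookkeeping is handled, the proof reduces to invoking Hoyer's result as a black box.
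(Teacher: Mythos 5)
Your reduction of the post-ANM to an ordinary ANM on $(\tilde X,\tilde Y)$ is correct and matches the paper's setup: the invertibility of $f_2,g_2$ makes the independence statements $N_Y \perp X \Leftrightarrow N_Y \perp \tilde X$ and $N_X \perp Y \Leftrightarrow N_X \perp \tilde Y$ transfer, and the regularity bookkeeping you list is exactly what is needed. The gap is in the dimension count. Theorem 1 of \citet{hoyer08_anm} (and its corollary) yields that $\xi=\log p_{\tilde X}$ satisfies a \emph{third-order} linear ODE, whose solution set is a \emph{3-dimensional} affine space (determined by $\xi(\tilde x_0),\xi'(\tilde x_0),\xi''(\tilde x_0)$). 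You claim the admissible $\xi$ drop to dimension two ``once the density-normalization constraint is absorbed,'' but that is not where the missing dimension comes from: normalization $\int e^{\xi}=1$ is a nonlinear constraint that would carve out a 2-dimensional submanifold rather than a 2-dimensional affine space, and it is not used in the paper at all. Invoking Hoyer as a black box therefore only proves a 3-dimensional bound, not the stated 2-dimensional one.

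The ingredient you are missing is the second conclusion of Theorem~\ref{thm:zhang09} (Theorem 1 of \citet{zhang2009pnl}): because the proposition fixes $h_X$ as well as $h_Y$ and $\nu_Y$, the independence $N_X \perp \tilde Y$ in the backward model imposes, via the cross-derivative condition $\partial^2 \log p(\tilde y, n_x)/\partial\tilde y\,\partial n_x = 0$, the additional pointwise constraint \eqref{eq:hx_constraint},
\begin{equation*}
\xi'' \;=\; \frac{\nu''_Y h'_Y}{h'_X} + \nu'_Y h''_Y - \nu''_Y (h'_Y)^2,
\end{equation*}
which determines $\xi''$ entirely from the fixed data $(\nu_Y,h_X,h_Y)$ and thus removes one degree of freedom from the 3-dimensional solution space of the ODE, giving the claimed 2-dimensional space. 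So your route is the same as the paper's up to the application of Hoyer's theorem, but without this extra constraint (which is genuinely a consequence of the post-nonlinear structure and of fixing $h_X$, not of the ANM ODE alone) the argument only establishes containment in a 3-dimensional space.
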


\looseness-1Before stating the proof of Proposition \ref{prop:main}, we show under which condition the pair of random variables $X, Y$ satisfies the forward and backward models of equations \eqref{eq:forward}, \eqref{eq:backward}: this is relevant for our discussion, as the proof of Proposition \ref{prop:main} consists of showing that this condition is \textit{almost} never satisfied. 

\paragraph{Notation.} We adopt the following notation: $\nu_X \coloneqq \log p_{N_X}$, $\nu_Y \coloneqq \log p_{N_Y}$, $\xi \coloneqq \log p_{\tilde{X}}$, $\eta \coloneqq \log p_{\tilde Y}$, and $\pi \coloneqq \log p_{\tilde X, \tilde Y}$.

\begin{theorem}[Theorem 1 of \citet{zhang2009pnl}]\label{thm:zhang09}
    Assume that $X, Y$ satisfies both causal relations of equations \eqref{eq:forward} and \eqref{eq:backward}. Further, suppose that $p_{N_{\mathsmaller Y}}$ and $ p_{\tilde{X}}$ are positive densities on the support of $N_{\mathsmaller Y}$ and $\tilde{X}$ respectively, and that $\nu_{\mathsmaller Y}, \xi, f_1, f_2, g_1,$ and $g_2$ are third order differentiable. Then, for each pair $(\tilde x, \tilde y)$ satisfying $\nu''_{\mathsmaller Y}(\tilde y - h_{\mathsmaller Y}(\tilde x))h_{\mathsmaller Y}(\tilde x) \neq 0$, the following differential equation holds:
    \begin{equation*}
        \xi''' = \xi''\left(\frac{h''_{\mathsmaller Y}}{h'_{\mathsmaller Y}} - \frac{\nu'''_{\mathsmaller Y}h'_{\mathsmaller Y}}{\nu''_{\mathsmaller Y}} \right) + \frac{\nu'''_{\mathsmaller Y} \nu'_{\mathsmaller Y} h''_{\mathsmaller Y} h'_{\mathsmaller Y}}{\nu''_{\mathsmaller Y}} - \frac{\nu'_{\mathsmaller Y}(h''_{\mathsmaller Y})^2}{h'_{\mathsmaller Y}} - 
2\nu''_{\mathsmaller Y} h''_{\mathsmaller Y} h'_{\mathsmaller Y} + \nu'_{\mathsmaller Y} h'''_{\mathsmaller Y},
    \end{equation*}
and $h_X$ is constrained in the following way:
\begin{equation}\label{eq:hx_constraint}
    \frac{1}{h'_X} = \frac{\xi'' + \nu''_Y(h'_Y)^2 - \nu'_Y h''_Y}{\nu''_Y h'_Y},
\end{equation}
where the arguments of the functions have been left out for clarity.
\end{theorem}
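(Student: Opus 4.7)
\textbf{Proof strategy for Theorem~\ref{thm:zhang09}.} The plan is to encode both causal directions as constraints on a single object, namely the log joint density $\pi(\tilde x,\tilde y)=\log p_{\tilde X,\tilde Y}(\tilde x,\tilde y)$, and then exploit the mismatch between the two factorizations. Since the hypothesis is that $X,Y$ simultaneously satisfy the forward post-ANM \eqref{eq:tilde_forward} and the backward post-ANM \eqref{eq:tilde_backward} (the invertibility of $f_2,g_2$ lets us pass freely between $(X,Y)$ and $(\tilde X,\tilde Y)$), $\pi$ admits two representations:
\begin{equation*}
\pi(\tilde x,\tilde y)=\xi(\tilde x)+\nu_Y\bigl(\tilde y-h_Y(\tilde x)\bigr)=\eta(\tilde y)+\nu_X\bigl(\tilde x-h_X(\tilde y)\bigr).
\end{equation*}
The first form comes from $N_Y\perp\tilde X$, the second from $N_X\perp\tilde Y$.

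First I would derive the constraint on $h_X$. Differentiating the backward representation gives $\pi_{\tilde x\tilde x}=\nu''_X$ and $\pi_{\tilde x\tilde y}=-\nu''_X\,h'_X(\tilde y)$, so the ratio $\pi_{\tilde x\tilde y}/\pi_{\tilde x\tilde x}=-h'_X(\tilde y)$ is a function of $\tilde y$ alone. Next I would compute the same ratio from the forward representation by a direct (careful) application of the chain rule, obtaining
\begin{equation*}
\pi_{\tilde x\tilde x}=\xi''+\nu''_Y(h'_Y)^2-\nu'_Y h''_Y,\qquad \pi_{\tilde x\tilde y}=-\nu''_Y h'_Y,
\end{equation*}
with the convention that primes on $\nu_Y$ are evaluated at $\tilde y-h_Y(\tilde x)$. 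Equating the two expressions for $\pi_{\tilde x\tilde y}/\pi_{\tilde x\tilde x}$ and inverting yields exactly the displayed identity \eqref{eq:hx_constraint} for $1/h'_X$; the hypothesis $\nu''_Y h'_Y\neq 0$ is precisely what is needed to take the inverse.

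Second, to obtain the third-order ODE I would enforce that the forward-side expression for $\pi_{\tilde x\tilde y}/\pi_{\tilde x\tilde x}$ must be independent of $\tilde x$. Writing $A=\xi''+\nu''_Y(h'_Y)^2-\nu'_Y h''_Y$ and $B=\nu''_Y h'_Y$, the condition becomes $A_{\tilde x}B=A\,B_{\tilde x}$. A direct computation (again using $\partial_{\tilde x}\nu^{(k)}_Y=-\nu^{(k+1)}_Y h'_Y$) gives
\begin{equation*}
A_{\tilde x}=\xi'''-\nu'''_Y(h'_Y)^3+3\nu''_Y h'_Y h''_Y-\nu'_Y h'''_Y,\qquad B_{\tilde x}=-\nu'''_Y(h'_Y)^2+\nu''_Y h''_Y.
\end{equation*}
Solving the resulting scalar equation for $\xi'''$ and collecting the $\xi''$ coefficient produces the stated differential equation. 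The hypothesis $\nu''_Y h'_Y\neq 0$ reappears here to justify dividing by $B$.

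\textbf{Main obstacle.} There is no conceptual difficulty beyond the standard Hoyer-style ``mismatched factorization'' trick; the hard part is entirely bookkeeping. In particular, the chain-rule factors of $-h'_Y(\tilde x)$ arising from differentiating $\nu_Y$ at the shifted argument $\tilde y-h_Y(\tilde x)$ must be tracked consistently, and the cancellation of the $\nu'''_Y(h'_Y)^3$ terms and the combination of the $\nu''_Y h'_Y h''_Y$ terms (coefficients $3$ from $A_{\tilde x}\cdot B$ and $-1$ from $A\cdot B_{\tilde x}$, giving the final $-2$) is where a sign slip would most easily hide. I would double-check the derivation by verifying, as a sanity test, that it collapses correctly in the additive-noise special case $h_Y=\mathrm{id}\circ g_2$ treated in Theorem~1 of Hoyer et al.~(2008).
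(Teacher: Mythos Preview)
Your proposal is correct and uses the same core device as the paper: write $\pi=\log p_{\tilde X,\tilde Y}$ in two ways via the forward and backward factorizations, then compare mixed second partials. For the third-order ODE you and the paper proceed identically (the paper differentiates $\pi_{\tilde x\tilde x}/\pi_{\tilde x\tilde y}$ in $\tilde x$ rather than its reciprocal, which is immaterial), and your bookkeeping of $A_{\tilde x},B_{\tilde x}$ and the $-2$ coefficient is accurate.

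The one noteworthy difference is in how the $h_X$ constraint \eqref{eq:hx_constraint} is obtained. You read it off in one line by equating $\pi_{\tilde x\tilde y}/\pi_{\tilde x\tilde x}=-h'_X(\tilde y)$ from the backward side with the forward-side quotient. The paper instead takes a longer detour: it performs the change of variables $(\tilde y,n_x)\mapsto(\tilde x,n_y)$, checks the Jacobian has unit determinant, and enforces $\partial^2_{\tilde y\, n_x}\log p(\tilde y,n_x)=0$ from the independence $N_X\perp\tilde Y$. Your shortcut is more economical and is already implicit in the paper's own Part~1 computation; the paper's route has the minor expository advantage of making the independence condition the visible source of the constraint, but the two derivations are equivalent.
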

\begin{proof}[Proof of Theorem \ref{thm:zhang09}] We demonstrate separately the two statements of the theorem.

\paragraph{Part 1.}  Given that equations \eqref{eq:forward} and \eqref{eq:backward} hold, this implies that the forward and backward models on $\tilde X, \tilde Y$ of equations \eqref{eq:tilde_forward} and \eqref{eq:tilde_backward}  are also valid, namely that:
\begin{align*}
    \tilde{Y} = h_Y(\tilde{X}) + N_Y, \\
    \tilde{X} = h_X(\tilde{Y}) + N_X. 
\end{align*}
These are the structural equations of two causal models, associated with the \textit{forward} $\tilde X \rightarrow \tilde Y$ and \textit{backward} $\tilde Y \rightarrow \tilde X$ graphs, respectively. Applying the Markov factorization of the distribution according to the forward direction, we get:
$$
p_{\mathsmaller{\tilde X, \tilde Y}}(\tilde x, \tilde y) = p_{\mathsmaller{\tilde Y | \tilde X}}(\tilde y | \tilde x)p_{\mathsmaller{\tilde X}}(\tilde x) = p_{\mathsmaller{N_Y}}(\tilde y - h_Y(\tilde x))p_{\mathsmaller{\tilde X}}(\tilde x),
$$
which implies
\begin{equation}\label{eq:pi_forward}
    \pi(\tilde x, \tilde y) = \nu_Y(\tilde y - h_Y(\tilde x)) + \xi(\tilde x),
\end{equation}
for any $\tilde x, \tilde y$.
Similarly, the Markov factorization on the backward model implies:
\begin{equation}\label{eq:pi_backward}
        \pi(\tilde x, \tilde y) = \nu_X(\tilde x - h_X(\tilde y)) + \eta(\tilde y).
\end{equation}

From \eqref{eq:pi_backward}, we have that:
\begin{align*}
    &\frac{\partial^2}{\partial \tilde x^2} \pi(\tilde x, \tilde y) = \nu''_X(\tilde x -  h_X(\tilde y)) \\
    &\frac{\partial^2}{\partial \tilde x \partial \tilde y} \pi(\tilde x, \tilde y) = -\nu''_X(\tilde x - h_X(\tilde y)) h'_X(\tilde y),
\end{align*}
which implies
\begin{equation}\label{eq:vanishing_pi_derivative}
    \frac{\partial}{\partial \tilde x}\left( 
\frac{\frac{\partial^2}{\partial \tilde x^2} \pi(\tilde x, \tilde y)}{\frac{\partial^2}{\partial \tilde x \partial \tilde y} \pi(\tilde x, \tilde y)} \right) = 0.
\end{equation}

Computing the same set of partial derivatives from \eqref{eq:pi_forward}, we find:
\begin{align*}
    &\frac{\partial^2}{\partial \tilde x^2} \pi(\tilde x, \tilde y) =  \nu''_Y(\tilde y - h_Y(\tilde x))(h'_Y(\tilde x))^2 - \nu'_Y(\tilde y - h_Y(\tilde x))h''_Y(\tilde x) + \xi''(\tilde x)\\
    &\frac{\partial^2}{\partial \tilde x \partial \tilde y} \pi(\tilde x, \tilde y) = -\nu''_Y(\tilde y -h_Y(\tilde x))h'_Y(\tilde x).
\end{align*}
from which follows:
\begin{equation*}
\begin{split}
    \frac{\partial}{\partial \tilde x}\left( 
\frac{\frac{\partial^2}{\partial \tilde x^2} \pi(\tilde x, \tilde y)}{\frac{\partial^2}{\partial \tilde x \partial \tilde y} \pi(\tilde x, \tilde y)} \right) &=  -2h''_Y +  \frac{\nu'_Yh'''_Y}{\nu''_Y h'_Y} - \frac{\xi'''}{\nu''_Y h'_Y} + \frac{\nu'''_Y\nu'_Yh''_Y}{(\nu''_Y)^2} - \frac{\nu'_Y(h''_Y)^2}{\nu''_Y (h'_Y)^2} + \frac{\xi''\nu'''_Y h''_Y}{(\nu''_Y)^2 \nu''_Y (h'_Y)^2} \\
&= 0.
\end{split}
\end{equation*}
where we drop the input arguments for conciseness. The equality with $0$ is given by the equality with \eqref{eq:vanishing_pi_derivative}. Manipulating the above expression, the first claim follows.

\paragraph{Part 2.} Next, we prove the constraint derived on $h_X$. To do this, we exploit the fact that $\tilde Y$ is independent of $N_X$, which implies the following condition \citep{lin1997factorizing}:
\begin{equation}\label{eq:spantini}
\frac{\partial^2}{\partial \tilde y \partial n_x} \log p(\tilde y, n_x) = 0,
\end{equation}
for any $(\tilde y, n_x)$. According to equations \eqref{eq:tilde_forward}, \eqref{eq:tilde_backward}, we have that:
\begin{align*}
    \tilde{Y} = h_Y(\tilde{X}) + N_Y,\\
     N_X = \tilde{X} - h_X(\tilde{Y}),
\end{align*}
such that we can define an invertible map $\Phi: (\tilde y, n_x) \mapsto (\tilde x, n_Y)$. It is easy to show that the Jacobian of the transformation has determinant $|J_\Phi| = 1$, such that
$$
p(\tilde y, n_Y) = p(\tilde x, n_Y),
$$
where $(\tilde x, n_Y) = \Phi^{-1}(\tilde y, n_X)$. Thus, being $\tilde X, N_Y$ independent random variables, we have that:
$$
\log p(\tilde y, n_X) = \log p(\tilde x) + \log p(n_Y) = \xi(\tilde x) + \nu_Y(n_Y).
$$
Given that $\tilde X = h_X(\tilde Y) + N_X$, we have that
$$
\frac{\partial^2}{\partial \tilde y \partial \tilde n_X} \log p(\tilde x) = \xi''h'_X,
$$
while $N_Y = \tilde Y - h_Y(\tilde X)$ implies
$$
\frac{\partial^2}{\partial \tilde y \partial \tilde n_X} \log p(n_Y) = -\nu''_Y h'_Y + \nu''_Y h'_X (h'_Y)^2 - \nu'_Yh'_Xh''_Y,
$$
such that
$$
\log p(\tilde x, n_Y) = \xi''h'_X + -\nu''_Y h'_Y + \nu''_Y h'_X (h'_Y)^2 - \nu'_Yh'_Xh''_Y,
$$
which must be equal to zero, being equal to the LHS of \eqref{eq:spantini}. Thus, we conclude that
$$
    \frac{1}{h'_X} = \frac{\xi'' + \nu''_Y(h'_Y)^2 - \nu'_Y h''_Y}{\nu''_Y h'_Y},
$$
proving the claim.
\end{proof}

\subsection{Proof of Proposition \ref{prop:main}}\label{app:main_proof}
\begin{proof}
    Under the hypothesis that equations \eqref{eq:forward}, \eqref{eq:backward} hold, i.e. when the data generating process satisfy both a forward and a backward model, by Theorem \ref{thm:zhang09} we have that:
    \begin{equation}\label{eq:tmh_main_eq1}
    \xi'''(\tilde x) = \xi''(\tilde x)G(\tilde x, \tilde y) + H(\tilde x, \tilde y),
    \end{equation}
    where
    \begin{spreadlines}{1em}
    \begin{align*}
        &G(\tilde x, \tilde y) = \left(\frac{h''_{\mathsmaller Y}}{h'_{\mathsmaller Y}} - \frac{\nu'''_{\mathsmaller Y}h'_{\mathsmaller Y}}{\nu''_{\mathsmaller Y}} \right) , \\
        & H(\tilde x, \tilde  y) = \frac{\nu'''_{\mathsmaller Y} \nu'_{\mathsmaller Y} h''_{\mathsmaller Y} h'_{\mathsmaller Y}}{\nu''_{\mathsmaller Y}} - \frac{\nu'_{\mathsmaller Y}(h''_{\mathsmaller Y})^2}{h'_{\mathsmaller Y}} - 
    2\nu''_{\mathsmaller Y} h''_{\mathsmaller Y} h'_{\mathsmaller Y} + \nu'_{\mathsmaller Y} h'''_{\mathsmaller Y}.
    \end{align*}
    \end{spreadlines}
    Define $z \coloneqq \xi'''$, such that the above equation can be written as $z'(\tilde x) = z(\tilde x)G(\tilde x, \tilde y) + H(\tilde x, \tilde y)$. given that such function $z$ exists, it is given by:
    \begin{equation}\label{eq:pde_solution}
        z(\tilde x) = z(\tilde x_0)e^{\int_{\tilde x_0}^{\tilde x} G(t, y)dt} + \int_{\tilde x_0}^{\tilde x} e^{\int_{\hat{t}}^{\tilde x} G(t, y)dt}H(\hat{t}, y)d\hat{t}.
    \end{equation}
    
    Let $\tilde y$ such that $\nu''_Y(\tilde y - h_Y(\tilde x))h'_Y(\tilde x) \neq 0$ holds for all but countable values of $\tilde x$. Then, $z$ is determined by $z(\tilde x_0)$, as we can extend equation \eqref{eq:pde_solution} to all the remaining points. The set of all functions $\xi$ satisfying the differential equation \eqref{eq:tmh_main_eq1} is a 3-dimensional affine space, as fixing $\xi(\tilde x_0), \xi''(\tilde x_0), \xi''(\tilde x_0)$ for some point $\tilde x_0$ completely determines the solution $\xi$. Moreover, given $\nu_Y, h_X, h_Y$ fixed, $\xi''$ is specified by \eqref{eq:hx_constraint} of theorem \ref{thm:zhang09}, which implies:
    $$
    \xi'' = \frac{\nu''_Y h'_Y}{h'_X} + \nu'_Yh''_Y - \nu''_Y(h'_Y)^2,
    $$
    which confines $\xi$ solutions of \eqref{eq:tmh_main_eq1} to a 2-dimensional affine space.
\end{proof}

\end{document}